\documentclass[12pt]{article}
\usepackage[top=1in,bottom=1in, left=1in, right=1in]{geometry}
\usepackage[utf8]{inputenc}
\usepackage{tabularx}
\usepackage{bbm}
\usepackage{mathtools}
\usepackage{enumitem}
\usepackage[english]{babel}
\usepackage{amssymb}
\usepackage{amsmath}
\usepackage{xcolor}
\usepackage{url}
\usepackage{relsize}
\usepackage{breqn}
\usepackage{comment}
\usepackage{amsmath,amsthm,amssymb,amsopn,amsfonts,pdfpages,dsfont}
\usepackage{graphics}
\usepackage{graphicx}
\usepackage{bbm,subfig}
\usepackage{caption,url}
\usepackage[colorlinks=true,linkcolor=cyan,citecolor=blue]{hyperref}
\usepackage{booktabs}
\usepackage{stackengine}
\usepackage{authblk}
\usepackage{verbatim}
\usepackage{bm}
\usepackage{bbold}
\usepackage{setspace}
\usepackage{blkarray}
\usepackage{multirow}
\usepackage{mwe}
\usepackage{subfig}
\usepackage[ruled,vlined]{algorithm2e}

\newtheorem{definition}{Definition}

\newtheorem{rem}{Remark}
\newtheorem{theorem}{Theorem}

\setcounter{MaxMatrixCols}{20}

\newcommand{\bX}{\mathbf{X}}

\newcommand{\bB}{\mathbf{B}}
\newcommand{\bU}{\mathbf{U}}
\newcommand{\bA}{\mathbf{A}}

\newcommand{\bZ}{\mathbf{Z}}
\newcommand{\bY}{\mathbf{Y}}

\newcommand{\hbX}{\hat{\mathbf{X}}}
\newcommand{\hbb}{\hat{\mathbf{B}}}
\newcommand{\hbbc}{\hat{\mathbf{B}}^{(c)}}
\newcommand{\hbbct}{\hat{\mathbf{B}}^{(c,t)}}
\usepackage{todonotes}

\title{\textbf{Adversarial contamination of networks in the setting of vertex nomination: a new trimming method }}
\author{Sheyda Peyman$^1$, 
Minh Tang$^2$,
Vince Lyzinski$^1$, \\
\vspace{3mm}
\small{$^1$University of Maryland, College Park, Department of Mathematics\\
  $^2$North Carolina State University, Department of Statistics\\}
}
\date{}

\begin{document}

\maketitle
\begin{abstract}
As graph data becomes more ubiquitous, the need for robust inferential graph algorithms to operate in these complex data domains is crucial. In many cases of interest, inference is further complicated by the presence of adversarial data contamination. The effect of the adversary is frequently to change the data distribution in ways that negatively affect statistical and algorithmic performance. We study this phenomenon in the context of vertex nomination, a semi-supervised information retrieval task for network data. Here, a common suite of methods relies on spectral graph embeddings, which have been shown to provide both good algorithmic performance and flexible settings in which regularization techniques can be implemented to help mitigate the effect of an adversary.  Many current regularization methods rely on direct network trimming to effectively excise the adversarial contamination, although this direct trimming often gives rise to complicated dependency structures in the resulting graph. We propose a new trimming method that operates in model space which can address both block structure contamination and white noise contamination (contamination whose distribution is unknown). This model trimming is more amenable to theoretical analysis while also demonstrating superior performance in a number of simulations, compared to direct trimming. 
\end{abstract}

\section{Introduction}

Graph-valued data arises in numerous diverse scientific fields ranging from sociology, epidemiology and genomics to neuroscience and economics.
For example, sociologists have used graphs to examine the roles of user attributes (gender, class, year) at American colleges and universities through the study of Facebook friendship networks \cite{facebook} and have studied segregation and homophily in social networks \cite{mele};
epidemiologists have recently modeled Human-nCov protein-protein interactions via graphs \cite{covid}, and neuroscientists have used graphs to model neuronal connectomes \cite{eichler2017complete}. 
As graphs have become more prevalent, there has been a need for robust graph algorithms to operate in these complex data domains.

In many cases of interest, inference is further complicated by the presence of adversarial data contamination designed to reduce algorithmic effectiveness.
Some examples include attacks in Graph Neural Networks (GNN) via gradient-based attack procedures \cite{adversarial_large_graphs} and via reinforcement learning \cite{adverDL2} to name a few (for a review of adversarial attacks and defenses on graphs in Deep Neural networks (DNNs) see \cite{adversarial_survey}).
Often, the effect of the adversary is to change the data distribution in ways that negatively affect statistical inference and algorithmic performance. 
In this paper we study this phenomenon in the context of the \textit{vertex nomination} (VN) inference task \cite{CopPri2012,Coppersmith2014,marchette2011vertex,fishkind2015,lyzinski2017consistent}, a semi-supervised information retrieval task akin to personalized recommender systems \cite{resnick1997recommender} on graphs. 
Succinctly, the vertex nomination problem can be stated as ``given a pair of graphs $G_1$ and $G_2$ and a set of vertices of interest in $G_1$, find a corresponding set of vertices of interest in $G_2$ and create a rank list of the vertices in $G_2$" \cite{patsolic2017vertex} (the original, single graph analogue tasked the user with finding additional vertices of interest in $G_1$ given a training set of vertices of interest in $G_1$).
The corresponding vertices of interest in $G_2$ should ideally (if they exist) appear at the top of the nomination list.
In the recent literature, algorithms for approximately solving the vertex nomination task have been implemented in myriad applied problem spaces such as
finding fraud in the Enron email corpus \cite{CopPri2012,marchette2011vertex}, 
identifying web advertisements associated with human trafficking \cite{fishkind2015},
identifying search queries across Bing transition rate networks \cite{agterberg2019vertex, zheng_vn}, and
identifying neurons across hemispheres in a Drosophila larva brain connectome \cite{Bing}, among others.

While the theoretical and practical impacts of adversarial noise and subsequent regularization have been widely studied in graph-valued data applications (see, for example, \cite{advCD,cai2015robust,advDL,adverDL2,entezari2020all} among myriad others), 
the development of these concepts in vertex nomination is relatively nascent and is an area of current research. 
In \cite{agterberg2019vertex}, the effect of adversarial data contamination was introduced and studied in the context of vertex nomination.
In addition to developing a theoretical basis for understanding the action of an adversary in vertex nomination,  the authors in \cite{agterberg2019vertex} empirically demonstrate the expected cycle of performance degradation due to adversarial noise followed by data regularization (via the trimming method inspired by \cite{edge2018trimming}) recovering much of the lost performance.
The adversarial contamination model in \cite{agterberg2019vertex} is formulated as a probabilistic mechanism acting on the network via edge/vertex deletion or addition.
The goal of the adversary is to move the distribution out of the consistency class of a given vertex nomination rule (see \cite{lyzinski2017consistent}), and thus diminish algorithmic performance.
Within the context of stochastic blockmodel graphs \cite{holland1983stochastic}, a noise model (initially proposed in \cite{cai2015robust}) is introduced in \cite{agterberg2019vertex} in which the addition/deletion of edges and vertices in the network effectively introduces ``noise" blocks into the original blockmodel distribution.
A network regularization method is then introduced which seeks to trim the noise blocks via a network analogue of the classical trimmed-mean estimator \cite{stigler1973asymptotic,huber2004robust} for outlier contaminated data (see Section \ref{sec:blkreg}  for detail).
While the trimming regularization of \cite{agterberg2019vertex} is 
empirically demonstrated to be effective in both real and synthetic applications, it is both difficult to analyze theoretically and practically limited, as it is designed to combat a specific noise-type.

Building upon this work, we consider the effect on vertex nomination performance of a combination of both structured block adversarial noise (as defined in \cite{agterberg2019vertex,cai2015robust}) and diffuse white noise.
Situating our exploration in the context of random dot product graphs (RDPGs) \cite{ hoff2002latent,young2007random,athreya2017statistical} and stochastic blockmodel graphs, 
we propose multiple (theoretically more tractable) regularization methods that can be combined to combat very general noise settings (see Section \ref{general_setting}), and show empirically superior performance. 

\subsection{Vertex Nomination}
\label{sec:VN}
Informally, the vertex nomination (VN) problem we pursue herein can be stated as follows:  Given vertices of interest $S_1$ in a graph $G_1$ and a second graph $G_2$ with a portion of its vertices being also of interest, $S_2\subset V(G_2)$, produce a rank list of the vertices of $G_2$ with the unknown vertices in $S_2$ ideally concentrating at the top of the rank list.
What defines vertices as ``interesting" is vague above, and this is intentional.
Indeed, we allow the user broad leeway in defining what makes a vertex interesting, whether it be membership in a community of interest \cite{fishkind2015}, vertices involved in illicit activity, or vertices corresponding to a particular user in a social network \cite{patsolic2017vertex}.
While a formal definition of the above is presented in \cite{lyzinski2017consistent,agterberg2019vertex}, we do not present the full formal definition here as it would introduce a needless notational complexity, and the informal definition suffices for our present purposes.

Early work in VN considered community membership as the trait defining interesting vertices as interesting  \cite{CopPri2012,Coppersmith2014,marchette2011vertex}, with notable applications including nominating fraudulent activity at Enron 
\cite{CopPri2012}, recommending items in an entity transition graph using data from Bing \cite{Bing} and helping identify websites involved in human trafficking \cite{fishkind2015}.
Rich theory establishing the notions of consistency and Bayes optimality were derived in this community-focused setting \cite{fishkind2015,lyzinski2016consistency,yoder2018vertex}.
More recent work defines the problem of nominating across networks, with interestingness defined more broadly---for example a particular user or collection of users across social networks  \cite{patsolic2017vertex} 
or vertices corresponding to a particular neuron/region in a connectome \cite{levin2020role} might be the vertices of interest. 
Theory was developed in \cite{lyzinski2017consistent, agterberg2019vertex}, where the notion of consistent vertex nomination schemes is defined and developed, with the role of features being further explored in \cite{levin2020role}. 
In addition to problem formulation and theory,
a significant number of methods have been developed to practically tackle the vertex nomination problem including those based on spectral decomposition \cite{yoder2018vertex,fishkind2015,zheng_vn}, likelihood maximization \cite{fishkind2015}, localized graph matching \cite{patsolic2017vertex} Bayesian MCMC \cite{bays}, and specialized ILP formulations \cite{Bing}, to name a few. 

In \cite{agterberg2019vertex}, our motivating work for adversarial VN, the authors considered the VN problem situated across a pair of networks with latent community structure.
Their contamination model (see Section \ref{sec:blocknoise}) corrupted the community communication probabilities and memberships by introducing into each community anomalous vertices with anomalous connection probabilities.
In light of this, a natural model to situate our initial VN analysis is the stochastic blockmodel of \cite{holland1983stochastic} (see Definition \ref{def:sbm}) which posits a simple network model with latent community structure.
This model allows us to handle both the network with community structure and community-structured noise (as considered in \cite{agterberg2019vertex}).
In some settings, more nuanced noise structures (that depend on features beyond community structure) may be more appropriate, and so we will also consider in our analysis the generalized random dot product graph of \cite{young2007random,rubin2017statistical} in our contamination modeling.
The generalized random dot product graph (see Definition \ref{def:RDPG}) will allow us to consider more ``diffuse" contamination frameworks (see Section \ref{sec:difcon}), including white-noise settings and manifold-structured noise settings as well. 
Before delving deeper into the VN problem setting, these two contamination models, and our novel methodologies for regularizing this noise, we will first formally introduce the stochastic blockmodel and generalized random dot product graph models in the next section.

\section{Background}
\label{sec:BG}

We now present 
the setting we work in and the relevant mathematical definitions used in our methods. 
We begin by giving a brief introduction to the stochastic blockmodel and generalized random dot product graph model, followed by presenting the embedding method we choose to work with, namely spectral embedding of the adjacency matrix (ASE). 

\subsection{The Stochastic Blockmodel and Random Dot Product Graphs}
\label{sec:SBM_RDPG}

Random graph models allow us to situate our analysis in the context of traditional statistical inference. 
A host of random graph models have been proposed in the literature (see \cite{kolaczyk2014statistical,kolaczyk2009statistical}), and two of the more popular models in the statistical network inference community are the stochastic blockmodel and the random dot product graph model.
The stochastic blockmodel (SBM), introduced in \cite{holland1983stochastic}, provides a simple model for networks with latent community structure, 
and the SBM and its variants (degree corrected SBM \cite{karrer2011stochastic}, mixed membership SBM \cite{airoldi2008mixed}, hierarchical SBM \cite{lyzinski2016community,peixoto2014hierarchical}, etc.) have been popular models for exploring inference tasks such as community detection/clustering \cite{rohe2011spectral,suss,zhao2012consistency,amini2013pseudo,newman2016equivalence} and community testing \cite{bickel2016hypothesis,lei2016goodness}.
%Moreover, although simplistic models for the complexity of real world networks, 
Furthermore, while rather simple, SBMs can also be viewed as an analogue of network histograms and thus provide a universal representation for unlabeled graphs \cite{olhede2014network}.

\begin{definition}[Stochastic Block Model (SBM) with sparsity parameter $\nu$]
\label{def:sbm}
A random graph $G = (V,E)$ on $n$ vertices is distributed according to a Stochastic Blockmodel with parameters K, $\bB$, $\pi$, and sparsity parameter $\nu$ (written $G\sim$SBM$(n,K,\bB,\pi,\nu)$) if the following hold:
\begin{itemize}
    \item[i.] Each vertex $i \in \{1,...,n\}$ is independently assigned to a community/block $\{1,2,...,K\}$ according to the probability vector $\pi\in\mathbb{R}^K$; we will denote the block membership of vertex $v\in V$ via $b_v$. 
    \item[ii.] $\bB=[B_{ij}] \in [0,1]^{K\times K}$, the block probability matrix, is a $K \times K$ symmetric matrix, whose entries provide (up to the sparsity factor $\nu$) the probability of a vertex in one block communicating with a vertex in another block;  $\nu\in[0,1]$ is a sparsity factor controlling the graph density.
    \item[iii.] Conditional on the block-membership for each vertex, the (undirected) edges of the graph are independently sampled according to:
   $$\text{If $u,v\in V$, then
    $\mathbb{1}_{u\sim_{G}v}\sim\text{Bernoulli}(\nu B_{b_v,b_u}).$}$$
\end{itemize}
\end{definition}
\noindent When working across pairs of SBMs, it is often convenient to work within the context of an SBM model that allows for structured dependence across the edges of multiple networks; towards this end, we next introduce the $\rho$-correlated Stochastic Blockmodel from \cite{fishkind2019seeded}.
\begin{definition}[Sparse $\rho$-Correlated Stochastic Block Model (SBM)]
\label{def:rhoSBM}
A pair of graphs $(G_1,G_2)$, is an instantiation of a correlated Stochastic Blockmodel with parameters K, $\bB$, $\pi$ and $\rho$ and with sparsity parameter $\nu$ (written $(G_1,G_2)\sim\text{SBM}(n,K,\bB,\pi,\nu,\rho)$) if the following hold:
\begin{itemize}
    \item[i.] Marginally, $G_1 \sim SBM(n,K,\bB,\pi,\nu)$ and $G_2 \sim SBM(n,K,\bB,\pi,\nu)$.
    \item[ii.] Conditional on the block-membership for each vertex in each graph, the collection of the following indicator random variables is mutually independent,
$$ \left\{ \{ \mathds{1}_{u\sim_{G_1}v}\}_{\{u,v\}\in\binom{V}{2}} \cup \{\mathds{1}_{u \sim_{G_2}v} \}_{\{u,v\}\in\binom{V}{2}} \right\}$$
 except that for each $\{ u,v \} \in \binom{V}{2}$, the correlation between
$(\mathbb{1}_{u \sim_{G_1}v})$ and $(\mathbb{1}_{u \sim_{G_2}v})$ is $\rho.$
\end{itemize}
\end{definition}
\begin{rem}
There is an alternate parameterization of the SBM that we will find convenient in experiments, namely the case when the block sizes are fixed.  
In this case (written $G\sim \mathrm{SBM}(n,K,\bB,\vec n,\nu)$), the block probability assignment vector $\pi$ is replaced by $\vec n\in\mathbb{Z}^K$ satisfying $n_i\geq 0$ for all $i\in[K]$ and $\sum_{i=1}^Kn_i=n$.
Here, vertices are preassigned into the $K$ blocks (so that $|\{v:b_v=i\}|=n_i$) and edges are conditionally independent given these assignments.
The remainder of the definition is essentially unchanged.
\end{rem}

Another popular network model in the statistical network inference literature is the Generalized Random Dot Product Graph (GRDPG), which posits that the edge connectivity is a function of latent vertex attributes that are (potentially) more general than simple community membership \cite{young2007random,athreya2017statistical,rubin2017statistical}.
In the GRDPG setting, inference often begins with estimation of the latent positions \cite{athreya2017statistical}, as these estimates often provide low-dimensional Euclidean representations for the graph at the vertex level.
Given sufficient control over the estimation error of the latent positions \cite{cape2019two}, various inference tasks can be profitably pursued in the embedding space, including clustering \cite{rohe2011spectral,suss,lyz}, classification \cite{sussman12:_univer,tang2013universally}, and testing \cite{tang2017semiparametric,tang2017nonparametric,alyakin2020correcting,agterberg2020nonparametric}, among others. 
\begin{definition}[Generalized Random Dot Product Graph (GRDPG) with sparsity parameter $\nu$]
\label{def:RDPG}
  Let $d \geq 1$ be given and let $\mathcal{X}$ be  a  subset  of $\mathbb{R}^{d}$ such that
  $x^{\top} I_{p,q} y \in [0,1]$. Here $I_{p,q}$ is
  a $d \times d$ diagonal matrix with diagonal entries containing $p$
  ``+1's'' and $q$ ``-1's'' for integers $p \geq 1, q \geq 0$, $p + q = d$. 
 Let $F$ be a distribution supported on $\mathcal{X}\subset \mathbb{R}^d$.
 A random $n$-vertex graph $G = (V,E)$ is distributed according to a Generalized Random Dot Product Graph with parameters $\bX$ and $F$ and sparsity parameter $\nu$ (written $G\sim \mathrm{GRDPG}(\bX,F,\nu)$) if the following hold: 
 
\begin{itemize}
    \item[i.]  $\bX$ is a $ n\times d$ matrix whose rows $X_1,\cdots,X_n$ are i.i.d.\@ random vectors sampled from $\mathcal{X}$ according to $F$.
    \item[ii.] Conditional on $\bX$, the (undirected) edges of the graph are independently Bernoulli random variables with $\mathbb{P}(u\sim_{G}v) = \nu X_u^{\top} I_{p,q} X_v$.
    Written compactly (where $\bA$ is the adjacency matrix of $G$),
    \begin{equation}
        \label{eq:rdpg}
        \mathbb{P}(A|\bX)=\prod_{\{i,j\}\in\binom{V}{2}}(\nu X_i^{\top} I_{p,q} X_j)^{A_{i,j}}(1-\nu X_i^{\top} I_{p,q} X_j)^{1-A_{i,j}}.
    \end{equation}
\end{itemize}
\end{definition}

\noindent Similar to the correlated SBM setting, the following model from \cite{patsolic2017vertex,zheng_vn} allows us to work with pairs of correlated GRDPGs.
\begin{definition}[Sparse $\rho$-Correlated GRDPG]
\label{def:rhoRDPG}
A pair of graphs $(G_1,G_2)$, is an instantiation of a correlated GRDPG with parameters $F$, $\bX$, and $\rho$ and with sparsity parameter $\nu$ (written $(G_1,G_2)\sim \mathrm{GRDPG}(\bX,F,\nu,\rho)$) if the following hold:
\begin{itemize}
\item[i.]  $\bX$ is a $ n\times d$ matrix whose rows $X_1,\cdots,X_n$ are i.i.d.\@ random vectors sampled from $\mathcal{X}$ according to $F$.
    \item[ii.] Marginally, $G_1 \sim \mathrm{GRDPG}(\bX,F,\nu)$ and $G_2 \sim \mathrm{GRDPG}(\bX,F,\nu)$.
    \item[iii.] Conditional on $\bX$, the collection of the following indicator random variables is mutually independent,
$$ \left\{ \{ \mathds{1}_{u\sim_{G_1}v}\}_{\{u,v\}\in\binom{V}{2}} \cup \{\mathds{1}_{u \sim_{G_2}v} \}_{\{u,v\}\in\binom{V}{2}} \right\}$$
 except that for each $\{ u,v \} \in \binom{V}{2}$, the correlation between
$(\mathbb{1}_{u \sim_{G_1}v})$ and $(\mathbb{1}_{u \sim_{G_2}v})$ is $\rho.$
\end{itemize}
\end{definition}
\noindent We note here that the GRDPG model encompasses the SBM model and its popular variants (degree-corrected, hierarchical, etc.) as well as any (conditionally) edge independent random graph for which the edge probabilities matrix is low rank. 
Furthermore, any latent position graph \cite{hoff2002latent} on $n$ vertices can be approximated by a GRDPG with latent positions $\bX \in \mathbb{R}^{n \times d}$ for some sufficiently large $d$.

\begin{rem}
\label{indef_remark} 
\emph{In the context of the GRDPG latent position random graph models, two different sources of nonidentifiability arise naturally: \textit{subspace nonidentifiability} and \textit{model-based nonidentifiability} \cite{agterberg2020two}.
Recall that
the edge probability matrix for a GRDPG is given by $\mathbf{P}=\nu \bX I_{p,q}\bX^T$ for some $n \times d$ matrix $\bX$.
\textit{Subspace nonidentifiability} arises in the context of the non-uniqueness of the eigenbasis of the subspaces corresponding to repeated eigenvalues; i.e., we cannot hope to exactly recover the columns of $\bX$ (i.e., the scaled eigenvectors of $\mathbf{P}$) corresponding to repeated eigenvalues of $\mathbf{P}$. 
More pressing here is the issue of \textit{model nonidentifiability}; specifically, transformations to the inputs $\bX$ under which $\mathbf{P}$ is invariant. More specifically for any indefinite orthogonal matrix $\mathbf{W}_{p,q}$ (so that $\mathbf{W}_{p,q} I_{p,q} \mathbf{W}_{p,q}^{T}=I_{p,q}$), we have $\mathbf{P}=\bX I_{p,q}\bX ^{T}=\bX  \mathbf{W}_{p,q}I_{p,q} \mathbf{W}_{p,q}^T \bX ^{T}$}.
\end{rem}

\subsection{Adjacency Spectral Embedding}
\label{sec:ASE}

In the setting of GRDPGs and more general latent position random graphs, spectral embedding-based methods have proven effective at estimating the latent vertex features $X_i$.
Two popular spectral embedding techniques are the Laplacian Spectral Embedding (LSE) \cite{rohe2011spectral,tang2018limit} and the Adjacency Spectral Embedding (ASE) \cite{suss}.
Herein, we will focus our attention on the ASE.
\begin{definition}[Adjacency Spectral Embedding (ASE)]
\label{def:ASE}
Given the adjacency matrix $\bA$ of an undirected graph, the d-dimensional adjacency spectral embedding of A is defined as follows:
\begin{equation}
\text{ASE}(\bA,d):=\hat{\bX}_d = \mathbf{U} \Sigma^{1/2}\in\mathbb{R}^{n\times d},
\label{ase_def}
\end{equation}
where in the above expression, 
\begin{itemize}
    \item[i.] The singular value decomposition (SVD) of $|\bA|$ is given via
$$|\bA|=(\bA^T\bA)^{1/2} = [\bU|\bU^\perp] [\Sigma\oplus \tilde\Sigma] [\bU|\bU^\perp]^T;$$
\item[ii.] $\Sigma\in\mathbb{R}^{d\times d}$ is the diagonal matrix whose diagonal entries correspond to the $d$ largest singular values of $|\bA|$. 
\item[iii.] $\bU\in\mathbb{R}^{n\times d}$ is the $n \times d$ matrix whose orthonormal columns correspond to the singular vectors of $|\bA|$ associated with the singular values in $\Sigma$;
\end{itemize}
\end{definition}
\noindent ASE provides a theoretically tractable embedding of $G$, with consistency \cite{suss} and central limit theorems \cite{athreya2017statistical,rubin2017statistical} both having been proven for the ASE estimating the underlying latent positions $\bX$ in the GRDPG setting.
Note that one key model parameter that must be estimated when practically computing the ASE is the embedding dimension $d$.
Herein, we will estimate $d$ by choosing an elbow in the scree plot of the singular values of $\bA$ \cite{zhu2006automatic}, a (principled) heuristic outlined in \cite{athreya2017statistical}, and then
take $p$ and $q$ to be the number of positive and negative eigenvalues of $\bA$ corresponding to these leading singular values.

\begin{rem}
\emph{As the ASE can only recover latent positions up to an indefinite orthogonal transform, methods that are invariant to such transforms (e.g., spectral clustering) are unaffected by the nonidentifiability here.
Indeed, interpoint distances are \textit{essentially} preserved; see \cite{agterberg2020two} for detail.}
\end{rem}

\section{Contamination and regularization in VN}
\label{sec:cont}
In order to study the effect of contamination (and regularization) on VN performance, we will adopt the following graph model for $(G_1,G_2)$ moving forward.
We posit that $(G_1,G_2)\sim \mathrm{SBM}(n, K, \bB, \pi,\rho, \nu)$ is the uncontaminated base graph pair. 
This functions to endow a natural notion of vertex correspondence between the vertices of $G_1$ and $G_2$ (as induced by the correlation $\rho$) and hence a canonical definition of vertices of interest in $G_2$ for any subset of vertices $S_1$ in $G_1$.
We further assume that we are given $G_1$, $S_1$ and $G_2^c$, where $G_2^c$ is the network $G_2$ contaminated by one of the stochastic noise models outlined below.

\subsection{Block Contamination}
\label{sec:blocknoise}

Our first noise model, inspired by \cite{cai2015robust} and presented as in \cite{agterberg2019vertex}, corrupts the block structure of our underlying blockmodel network $G_2$.
In brief, this model operates as follows.  Given parameters $\pi_+, \pi_-, s_+, s_-$ (which can vary with $n$, to account for sparsity):
\begin{itemize}
\item[i.] Initialize $E(G_2^c)=E(G_2)$.
    \item[ii.] Create the random set $W_+$ by independently selecting each vertex to be in $W_+$ with some probability $\pi_+$.
    Given $W_+$, independently create the random set $W_-$ by independently selecting each vertex in $V\setminus W_+$ to be in $W_-$ with some probability $\pi_-$.
    \item[iii.] For each vertex pair $\{ v,u\} \in W_+ \times (V \setminus W_-)$:
          \begin{enumerate}
              \item If $\{v,u \} \in E(G_2^c)$, nothing happens.
              \item If $\{ v,u\} \notin E(G_2^c)$, an edge is independently \textit{added} connecting $\{ v,u\}$ in $G_2^c$ with probability $s_+$.
          \end{enumerate}
          \item[iv.] For each vertex pair $\{ v,u \} \in W_- \times (V \backslash W_+)$,
          \begin{enumerate}
              \item If $\{ v,u\} \notin E(G_2^c)$, nothing happens.
              \item If $\{ v,u\} \in E(G_2^c)$, the edge is independently \textit{deleted} from $G_2^c$ with probability $s_-$.
          \end{enumerate}
\end{itemize}
Note that this noise model acts on $G_2$ by adding and removing edges amongst subsets of the vertices.
In this sense, this is an edge contamination model.  However, by considering the portion of the graph that is uncorrupted (edges amongst the vertices not in $W_+ \cup W_{-}$) as a core network correlated to the corresponding core part of $G_1$, we can envision this noise model as adding vertices and edges to the core in order to corrupt the signal.

\begin{rem}
\emph{
In a dense SBM setting (where $\nu=1$) with 2 blocks, consider an $n$-vertex stochastic blockmodel with block probability matrix given by
  \begin{align*}
    \bB =  \begin{pmatrix}
    \mathbf{p} & \mathbf{r} \\
    \mathbf{r} & \mathbf{q}
    \end{pmatrix} 
    \end{align*}
 where, wlog, $p \geq q \geq r \geq 0$.
 In this setting, the above adversarial model gives rise to a new $6$-block stochastic blockmodel, whose block probability matrix $\bB^c$ is given by
\[ 
  \bB^c = 
  \begin{blockarray}{ccccccc}
  \tilde{B}_1 & 
  \tilde{B}_1^{+} &  
  \tilde{B}_1^{-} & 
  \tilde{B}_2  & 
  \tilde{B}_2^{+} & 
  \tilde{B}_2^{-} \\
  \begin{block}{(cccccc)c}
    \textbf{p} & x_{1} & x_{2} & \textbf{r} & x_{3} & x_{4} & \tilde{B}_1  \\
     x_{1} & x_{1} & p & x_{3} & x_{3} & r &\tilde{B}_1^{+} \\
     x_{2} & p & x_{2} & x_{4} & r & x_{4} & \tilde{B}_1^{-} \\
     \textbf{r} & x_{3} & x_{4} & \textbf{q} & x_{5} & x_{6} & \tilde{B}_2 \\
     x_{3} & x_{3} & r & x_{5} & x_{5} & q & \tilde{B}_2^{+} \\
     x_{4} & r & x_{4} & x_{6} & q & x_{6} & \tilde{B}_2^{-}\\
   \end{block}
  \end{blockarray}
\]
 where 
 \begin{align*}
 & x_1 := p + s_{+}(1-p)\\
 & x_2 := p(1-s_{-})\\
 & x_3 := r + s_{+}(1-r)\\
 & x_4 := (1-s_{-})r\\
 & x_5 := q + s_{+}(1-q)\\
 & x_6 := q(1-s_{-})
\end{align*}
Letting $B_1$ and $B_2$ denote the blocks in the original SBM, 
in the above, $\tilde B_1^{+}$ are the vertices in $W_+ \cap B_1$; $\tilde B_1^{-}$ are the vertices in $B_1 \cap W_-$; and $\tilde{B_1}$ 
are the vertices in $B_1 \setminus ( \tilde B_1^{-}\cup \tilde B_1^{+} )$. $\tilde{B_2}$ is defined analogously.
Note first that the induced subgraph amongst $\tilde B_1\cup  \tilde B_2$ is an SBM with block probability matrix $\bB$, and we will often consider this ``core'' of the contaminated $G_2$ to be correlated to the correspondingly structured graph of $G_1$. 
Also note that, given a $K$-block SBM, this contamination model yields a $3K$-block contaminated SBM.
}
\end{rem}

\subsection{Block Regularization}
\label{sec:blkreg}

\begin{figure}[t!]
    \centering 
    \includegraphics[width=0.8\textwidth]{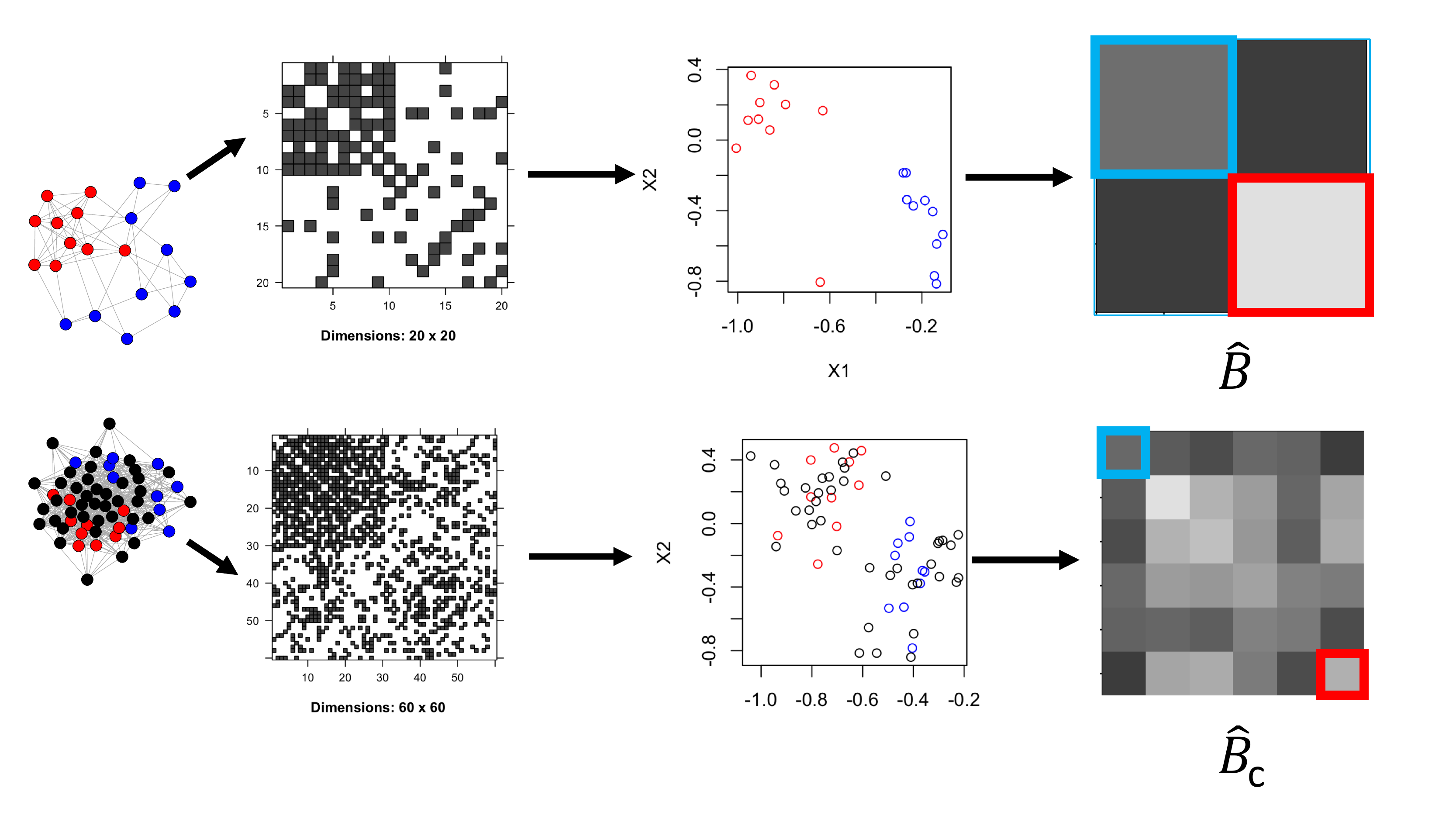}
    \caption{Block contamination and regularization pipeline in a simple 2-block SBM model.  
    The top row represents the clean $G_1$, the bottom row the contaminated $G_2$.  The aligned blocks in the matching step ideally recover the true correspondence across the red communities and across the blue communities (i.e., across the uncontaminated communities).
    }
    \label{fig:blockcontam}
\end{figure}

\noindent To mitigate the effect of the adversary in the above-described setting, \cite{agterberg2019vertex}
proposes a graph-trimming-based regularization method that is  empirically demonstrated to retrieve much of the original inferential performance. 
Once trimmed, the procedure they use to nominate first separately computes the ASE of the two graph, 
then uses seeded vertices to align the networks in the embedding space via orthogonal Procrustes analysis \cite{procr}. 
The vertices are then jointly clustered using model-based Gaussian mixture modeling (here BIC-penalized GMM as in \cite{mclust}). 
Finally, candidate matches/vertices of interest are ranked based on increasing Mahalanobis distance to the vertex (or vertices) of interest \cite{agterberg2019vertex}.
One of the crucial steps in the above-described procedure is precisely the regularization method employed, which is the network analogue of the classical trimmed mean estimator from robust statistics. 
This approach seeks to trim the top $h\%$ and bottom $\ell\%$ of vertices ordered by degree, where $(h,\ell)$ is chosen adaptively via a modularity maximization procedure. 

The trimming in the above method happens \textit{before} creating the adjacency spectral embeddings of the graphs, and the impact of the trimming on the distribution of the ASE is difficult to theoretically parse due to the complicated dependency structures that appear as a side product of the regularization.
While \cite{le2017concentration} have shown that regularization enforces concentration in sparse random graphs in the spectral norm, similar concentration results for the type of row-wise norms (i.e., the $2\to\infty$ norm) that would be needed for a finer-grained analysis are still open. 
In light of the myriad works proving consistency of the ASE for estimating the latent position parameters in random dot product graphs \cite{athreya2017statistical,rubin2017statistical},
trimming after embedding the graphs seems a plausible alternative method to avoid inference complications.

Our new method is based precisely on this point.
We first embed the graphs and estimate block/community structure (i.e., estimating the $\bB$ matrix in an SBM setting) for each graph in spectral space.
The networks are then aligned in model space (via graph matching the estimated $\bB$ matrices), yielding a subgraph of $G_2$ corresponding to $G_1$, with the remainder of $G_2$ being trimmed.
This induced subgraph is re-embedded 
and aligned to the 
ASE of $G_1$. 
The same ranking procedure as in the first method is used to rank the candidate matches in $G_2$. Below we give a more detailed description of our regularization method (see Algorithm \ref{alg:block_reg} for pseudocode).
\begin{itemize}
   \item[1.] Separately embed $G_1$ (into $\mathbb{R}^{d_1}$) and $G_2$ (into $\mathbb{R}^{d_2}$) via ASE, where $d_1$ and $d_2$ are estimated as in Section \ref{sec:ASE}. Estimate $p_1, q_1$ by counting the number of positive (respectively negative) eigenvalues of the adjacency matrix of $G_1$. Estimate $p_2,q_2$ by following a similar procedure.
   \item[2.] Separately cluster the vertices of $G_1$ and $G_2$ in the embedded space using GMM as employed by \texttt{MClust} \cite{mclust}.  Denote the cluster centers obtained from clustering $G_1$ (resp., $G_2$) via $\xi_1$ (resp., $\xi_2$).
   Modeling $G_1$ and $G_2$ via SBMs, estimate block probability matrices via $\hbb=\xi_1I_{p_1,q_1}\xi_1^T\in\mathbb{R}^{K_1\times K_1}$ and $\hbbc=\xi_2I_{p_2,q_2}\xi_2^T\in\mathbb{R}^{K_2\times K_2}$.
   For a proof of the Frobenius norm consistency of these estimates in the SBM setting, see Theorem \ref{thm:thm1} in Appendix \ref{app:b's}.
   \item[3.] 
    For $K_1\leq K_2$, then we will proceed to trim the graph in model space as follows. 
   \begin{itemize}
       \item[i.] First, align $\hbb$ to $\hbbc$ by finding
   $$P\in\text{argmin}_{Q\in\Pi_{K_1,K_2}}\|\hbb-Q\hbbc Q^T\|_
   F^2.$$
   where 
   $$\Pi_{K_1,K_2}=\{P\in\{0,1\}^{K_1\times K_2}\text{ s.t. }\vec 1_{K_2} P\leq \vec 1_{K_1}, P\vec 1_{K_2}= \vec 1_{K_1}\}.$$
   Note that while solving the above problem is NP-hard in general, there are computationally feasible options for relatively small $K_1$.
   \item[ii.] For $P$ in the above argmin (which need not be unique in general), denote $\xi_{2,t}:=P\xi_2$.
   Letting the collection of vertices whose blocks were selected by $P$ be denoted $\mathcal{I}$ (i.e., whose blocks were 
  \emph{not} trimmed), we have that the trimmed network is $G_2[\mathcal{I}]$, with estimated block probability matrix (abusing notation) denoted $\hbbct=\xi_{2,t}I_{p_2,q_2}\xi_{2,t}^T$.
    \end{itemize}
    \item[4.] Embed $G_2[\mathcal{I}]$ into $\mathbb{R}^{d_1}$ using ASE.
    Denoting the embeddings of $G_1$ and $G_2[\mathcal{I}]$ via $\hbX_1$ and $\hbX_2$ respectively.
    \item[5.] As we know which cluster centers in $G_1$ have been aligned via graph matching to those in $G_2[\mathcal{I}]$, we can use this information to align the graphs in the embedded space without the need for seeds.  To wit, solve the indefinite orthogonal Procrustes problem
    $$\mathbf{W}\in\text{argmin}_{\mathbf{O}\in\mathcal{O}_{p_1,q_1}}\|\xi_1\mathbf{O}-\xi_{2,t}\|_F,$$ and
    set $\hbX_{1,a}=\hbX_1 \mathbf{W}$ (where $\mathcal{O}_{p_1,q_1} = \{\mathbf{M} \in \mathds{R}^{d_1 \times d_1} \text{s.t. } \mathbf{M}^T I_{p_1, q_1} \mathbf{M} = I_{p_1, q_1}\}$ is known as the indefinite orthogonal group).
    See \cite{indefinite} for an approach to approximately solving the indefinite Procrustes problem.
    In many cases, it is appropriate to solve instead the orthogonal Procrustes problem 
    in which we seek 
    $$\mathbf{W}\in\text{argmin}_{\mathbf{O}\in\mathcal{O}_{d}}\|\xi_1\mathbf{O}-\xi_{2,t}\|_F,$$ 
    rather than the indefinite version.
    Indeed, in our experiments below, we found it sufficient to use the orthogonal Procrustes solution here (which also obviates the need to estimate the extra parameters $p_1$ and $q_1$). 
    We present the algorithm here in its fullest generality for use in settings where the indefinite Procrustes problem is needed.
\item[6.] Cluster the rows of 
$Z=\begin{pmatrix}
\widehat X_{1,a}\\
\widehat X_2
\end{pmatrix}$
using GMM (here, we employ \texttt{MClust} again), and finally, rank the candidate matches in $G_2[\mathcal{I}]$ according to the following Mahalanobis-distance-based scheme. \label{mahalanobis_item}
\begin{itemize}
\item[i.] Let $u \in V(G_1)$ and $v \in V(G_2[\mathcal{I}])$ be clustered points in $G_1$ and $G_2[\mathcal{I}]$, and let $\Sigma_u$ and $\Sigma_v$ be their respective covariance matrices obtained by the GMM-based clustering. 
\item[ii.] Compute (where for a matrix $\mathbf{M}$, $\mathbf{M}^\dagger$ represents the Moore-Penrose pseudoinverse of $\mathbf{M}$)
        \begin{align}
        \label{eq:mahal}
            \bigtriangleup(u,v) = \max(D_u(u,v),D_v(u,v))
        \end{align}
        where
         \[D_u(u,v) = \sqrt{(u-v) \Sigma_u^{\dagger} (u-v)^T} \]
        \[ D_v(u,v) = \sqrt{(u-v) \Sigma_v^{\dagger} (u-v)^T} \]
        \item[iii.] Rank the vertices in $G_2[\mathcal{I}]$ by increasing value of $\min_{u \in S_1} \bigtriangleup(u,v)$, where $S_1$ is the set of vertices of interest in $G_1$.
        \end{itemize}
\end{itemize}
\noindent Seeded vertices can be computationally (and financially) expensive to obtain.
In addition to being more amenable to theoretical analysis, one of the principle benefits of the current regularization scheme is that it obviates the need for seeded vertices.  
Step 5.\@ in the above algorithm replaces the seeded Procrustes alignment needed in \cite{agterberg2019vertex} with an unseeded alignment of cluster centers. See Figure \ref{fig:seed_stuff} for a comparison of our regularization procedures with seeds and without.

\subsubsection{Trimming in model versus graph space}
\label{sec:old_v_new}
Since the trimming  \cite{agterberg2019vertex} happens in the graph space, we will refer to it as the ``graph trimming" method. Similarly, we will refer to our newly presented method as ``model trimming". 
We next explore the comparative performance of these two methods using the same simulation setup as that described in \cite{agterberg2019vertex} (albeit, with different noise levels). 
In our simulations, we consider a graph $G_1$ with $500$ core vertices and a contaminated graph $G_2$ with $500+m$ vertices (here $m$ represents the level of contamination, where we consider $m=200$ and $m=400$). 
To wit, the graphs are sampled from the following SBM model:
\begin{itemize}
    \item[i.] $G_1\sim \mathrm{SBM}(500,2, \bB,\vec n=(250,250),\nu=1)$
    where 
    $$\bB=\begin{bmatrix} 
    0.7 &0.2\\
    0.2& 0.3
    \end{bmatrix}$$
    Here we use the true $d_1=2$ in the ASE embedding.
    \item[ii.] $G_2\sim \mathrm{SBM}(500+m,6, \bB^{(c)},\vec n=(250,m/4,m/4,250,m/4,m/4),\nu=1)$
    where $\bB^{(c)}$ is generated from $\bB$ as described in Section~\ref{sec:blocknoise} with $s+ = s_{-} = 0.2$, i.e., 
    $$\bB^{(c)}=\begin{bmatrix} 
0.70& 0.76 &0.56& 0.20& 0.36& 0.16\\
0.76& 0.76 &0.70& 0.36& 0.36& 0.20\\
0.56& 0.70 &0.56& 0.16& 0.20& 0.16\\
0.20& 0.36 &0.16& 0.30& 0.44& 0.24\\
0.36& 0.36 &0.20& 0.44& 0.44& 0.30\\
0.16& 0.20 &0.16& 0.24& 0.30& 0.24
    \end{bmatrix}$$
    Here we use the true $d_2=6$ in the ASE embedding.
        \item[iii.] Letting $\mathcal{C}=[1:250,(250+m/2+1):(500+m/2)]$
    $(G_1,G_2[\mathcal{C}])\sim \mathrm{SBM}(500,2, \bB,\vec n=(250,250),\rho)$, we obtain two graphs which have correlated core vertex sets.
    \end{itemize}
    
    \noindent We analyzed the performance of the VN task after trimming under both methods: graph trimming of \cite{agterberg2019vertex} and model trimming (using, for ease of comparison, simple orthogonal Procrustes in step 3.i of our procedure as opposed to the generalized Procrustes solver); the results are summarized in Figure \ref{fig:oldnew1}.
    Note that the graph trimming method requires seeded vertices to run to completion and thus, for this experiment we had used 10 randomly chosen seeds from $\mathcal{C}$ to align the embeddings when doing graph trimming. 
    In each panel of the figure, we plot on the $y$-axis the number of vertices in $G_1$ (when considered as the vertex of interest) with their corresponding vertex of interest ranked in the top $x$. 
    In other words, for $x=k$, $y=f(x)$ gives us the number of vertices in $G_1$ which have their corresponding vertex of interest (in $G_2$) ranked  ${1^{st},2^{nd},...,\text{or},k^{th}}.$ 
    In the middle (resp., bottom) row, we plot the performance of the trimming method proposed herein (resp., the trimming method of \cite{agterberg2019vertex}).
    In the top row, we plot the difference in performance across the two methods (model trimming - graph trimming).
    In the first column (resp., second and third), we show performance for $m=200$ and $\rho=0.7$ (resp., $m=400$, $\rho=0.7$ and $m=200$, $\rho=0.9$).
    Each figure represents 30 Monte Carlo simulations (paired within each column), with performance in each simulation plotted in gray and the average over all MC plotted in red (top) or black (bottom two rows).  In the bottom two rows, the blue line represents chance performance.
    
    In an ideal setting, where all the noise is trimmed perfectly and the nomination task performs perfectly, we would expect a horizontal line at $y=k^*$, where $k^*$ is the number of vertices in $G_1$. 
    From the figure, we observe that a higher correlation provides better VN performance for both methods, as we would intuitively expect. Furthermore, in graphs with greater number of noise vertices, the model trimming method performs significantly better than the graph trimming method.
    In the graphs with less noise, the model trimming method still performs better on average but the performance difference is less pronounced. We emphasize that the model trimming method uses no seed vertices while the graph trimming method uses ten seed vertices. 
    The effectiveness of the graph trimming method in \cite{agterberg2019vertex} was demonstrated in graphs with a relatively smaller amount of noise, and we suspect that our current method is indeed preferable in high-noise settings.
    In all cases, both methods are significantly better than chance here.
    
 \begin{algorithm}[t!]
\SetAlgoLined
\KwData{$G_1$ and $G_2$}
\begin{enumerate}
    \item $\hbX_1 = \text{ASE}(G_1,d_1)$ and 
          $ \hbX_2= \text{ASE}(G_2,d_2)$ (See Definition \ref{ase_def}) and estimate $p_1$, $q_1$, $p_2$, $q_2$, where $p_1+q_1=d_1$ and $p_2+q_2=d_2$.
    \item Separately cluster the rows of $\hbX_1$ and $\hbX_2$ via \texttt{MClust} (see \cite{mclust}); Denote the cluster centers obtained from clustering $\hbX_1$ (resp., $\hbX_2$) via $\xi_1$ (resp., $\xi_2$); 
    \item Set $\hbb=\xi_1 I_{p_1,q_1}\xi_1^T\in\mathbb{R}^{K_1\times K_1}$ and $\hbbc=\xi_2 I_{p_2,q_2} \xi_2^T\in\mathbb{R}^{K_2\times K_2}$;
    \item Find $P\in\text{argmin}_{Q\in\Pi_{K_1,K_2}}\|\hbb-Q\hbbc Q^T\|_F$;
    \item For $P$ in the above argmin, denote $\xi_{2,t} := P \xi_{2}$. Let $$\mathcal{I}=\{v\in V_2\text{ s.t. }v's\text{ corresponding cluster center is selected by }P\};$$   redefine 
    $ \hbX_2= \text{ASE}(G_2[\mathcal{I}],d_1)$;
    \item Solve $\mathbf{W}\in\text{argmin}_{O\in\mathcal{O}_{p_1,q_1}}\|\xi_1 O-P\xi_2\|_F$ and set $\hbX_{1,a}=\hbX_{1}\mathbf{W}$;
    \item Cluster the rows of $[\widehat{X}_{1,a}^T|\widehat{X}_{2}^T]^T$ via \texttt{MClust}, and rank the vertices according to the Mahalanobis distance computed in Eq. \ref{eq:mahal}.
\end{enumerate}
\caption{Block regularization pseudocode}
\label{alg:block_reg}
\end{algorithm}

\begin{figure}[t!]
    \centering
    \includegraphics[width=1\textwidth]{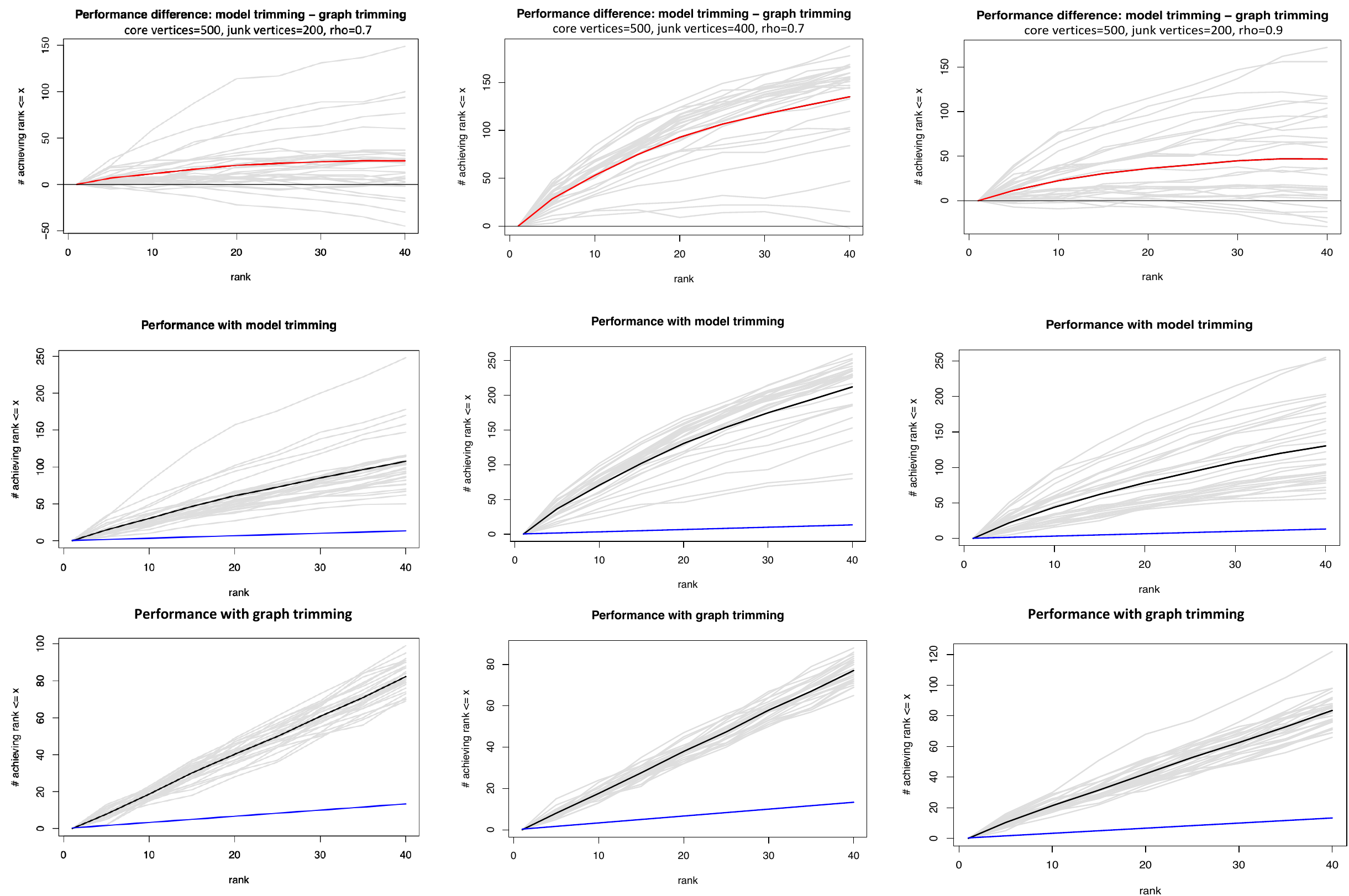}
    \caption{    In each panel of the figure, we plot on the $y$-axis the number of vertices in $G_1$ (when considered as the vertex of interest) with their corresponding vertex of interest ranked in the top $x$. 
    In the middle (resp., bottom) row, we plot the performance of the trimming method proposed herein (resp., the trimming method of \cite{agterberg2019vertex}).
    In the top row, we plot the difference in performance across the two methods.
    In the first column (resp., second and third), we show performance for $m=200$ and $\rho=0.7$ (resp., $m=400$, $\rho=0.7$ and $m=200$, $\rho=0.9$).
    Each figure represents 30 Monte Carlo simulations (paired within each column), with performance in each simulation plotted in gray and the average over all MC plotted in red (top) or black (bottom two rows).  In the bottom two rows, the blue line represents chance performance. Note that the range of the y-axis changes from figure to figure due to the difference in performance, number of noise vertices added, and vertices trimmed.
    } 
    \label{fig:oldnew1}
\end{figure}

%%%%%%%%%%%%%%%%%%%%%%%%%%%%%%%%%%%%%%
%%%%%%%%%%%%%%%%%%%%%%%%%%%%%%%%%%%%%%

\subsection{Diffuse noise contamination and regularization}
\label{sec:difcon}

\begin{figure}[t!]
    \centering
    \includegraphics[width=1\textwidth]{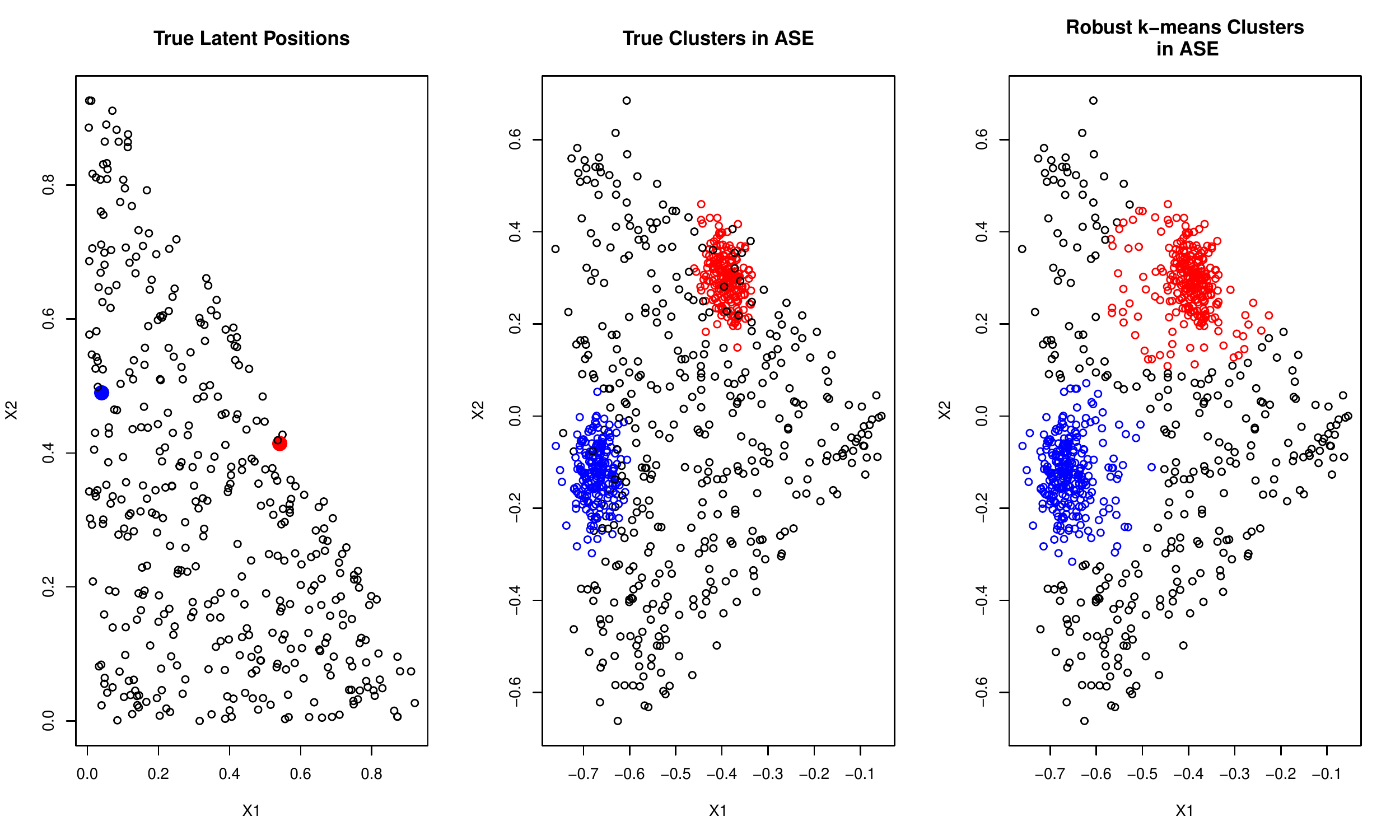}
    \caption{Diffuse noise contamination example in a 2-block SBM.  
    In the left panel, we plot the true latent positions of the signal (red/blue) and the noise (black); 
    in the center panel, we plot the estimated latent positions provided via ASE of the signal vertices (red/blue) and the noise (black)---note the rotation inherent to the GRDPG model; 
    in the right panel, we plot the clusters (in color) recovered by the robust $K$-means with $K=2$ and $\lambda=0.2$.
    }
    \label{fig:whitenoise}
\end{figure}

\noindent Although the above-described block regularization method is empirically (and theoretically) effective in the contaminated SBM setting above, in many real data settings the distribution of the noise is more nuanced or altogether unknown. 
Developing further regularization strategies to deal with broader noise models is a natural next step.
In this section we describe another technique, based on a robust K-means clustering method, that can be used in the setting where the contamination is unstructured (or less structured) diffuse noise.
We first describe the contaminated latent positions in the diffuse noise model, before giving a detailed description of our K-means based cleaning method.

Our starting point is that $G_2$ is a $k$-block SBM, that we further contaminate with unstructured noise.
To wit, let $\mathcal{X}_d$ be a subset of $\mathbb{R}^{d}$ such that $x^{\top} I_{p,q} y \in [0,1]$ for all $x,y \in \mathcal{X}_d$ and $\Omega$ be a convex subset of $\mathcal{X}_d$.  
    Next let $\{\mathfrak{z}^i\}_{i=1}^k$ be a collection of $k$ distinct points in $\mathcal{X}_d$. We then sample $n$ latent positions $\bY$ for a SBM graph from $F=\sum_{i=1}^k \pi_i\delta_{\mathfrak{z}^i}$ with $\pi_i>0$ for $i\in[k]$; note that $\bY\in\mathbb{R}^{n \times d}$ has at most $k$ distinct rows. 
   We then contaminate $\bY$ with i.i.d. ``white noise'' latent positions, $\mathbf{Z}\in\mathbb{R}^{m \times d}$, whose rows are i.i.d. uniformly distributed over $\Omega$. 
   Our contaminated model's latent positions can therefore be written as
    $\bX =[
\bY^T |
\bZ^T]^T\in\mathbb{R}^{(n+m) \times d}
$. 
The contaminated graph $G_2\sim$GRDPG($\bX,\nu$).
Note that the general nature of $\Omega$ makes $\bZ$ a natural model for more diffuse manifold noise contamination.

In order to regularize the diffuse noise out of $G_2$, we will employ a robust $K$-means clustering algorithm on $\hbX=$ASE($G_2,d$) as described below. 
Let $\mathcal{P}_2$ be the set of partitions of $[n+m]$ into two groups, and $\mathcal{C}_K$ the collection of sets of $K$ distinct points in $\mathcal{X}_d$.
We seek to solve the following optimization problem with tuning parameter $\lambda>0$,
\begin{equation}
\label{eq:opt}
\min_{\Phi\in \mathcal{C}_K, \mathfrak{p}\in \mathcal{P}_2} \underbrace{\Bigl(\sum_{i\in \mathfrak{p}_1}\min_{\phi\in \Phi}\|\phi-\hat X_i\|_2\Bigr)+\lambda |\{j:j\in\mathfrak{p}_2\}|}_{\Gamma(\Phi,\mathfrak{p})}.
\end{equation}
The partition of the vertices provided by $\mathfrak{p}\in\mathcal{P}_2$ divides the vertex set of $G_2$ into estimated signal vertices (i.e., those in $\mathfrak{p}_1$) and estimated noise vertices (those in $\mathfrak{p}_2$).
The vertices in $\mathfrak{p}_1$ contribute the the error in Eq. (\ref{eq:opt}) via the usual $K$-means term and are further clustered into $K$ disjoint groups.
Those vertices in $\mathfrak{p}_2$ are far from the cluster centers, and are not included in one of the $K$ clusters.
These unclustered points incur a constant cost (here $\lambda$) in Eq. (\ref{eq:opt}); $\lambda$ here is designed to penalize partitions that would cluster too few vertices in the noisy graph $G_2$ while also allowing for noise vertices to be excluded from the final clustering.
In practice, we can choose $\lambda$ based on the size of the clusters obtained by clustering the clean graph $G_1$; if $r$ is the largest cluster radius in $G_1$ 
then one simple heuristic is to choose $\lambda$ to be approximately $r+\log^2(n+m)/\sqrt{n+m}$ (see Eq. \ref{eq:IndConc2}) (we found $\lambda=0.2$ sufficient in most of our experiments).
The graph $G_2$ is then regularized by considering the ``cleaned" graph $G_2[\mathfrak{p}_1]$ in which the unclustered (ideally noise) vertices have been trimmed.
In the context of the above model, in Appendix \ref{sec:ktheory}, we establish the consistency of the optimal $K$-means clusters solving Eq.\@ (\ref{eq:opt}).
\begin{rem}
\emph{
In order to approximately solve Eq.\@ (\ref{eq:opt}), we adopt the following simple heuristic.  
We estimate a maximum cluster radius $r^*$ from a clustering of $G_1$, and use this optimal clustering in variant of the classical $K$-means procedure as follows:
\begin{itemize}
    \item[i.] Set $\mathfrak{p}_1=V, \mathfrak{p}_2=\emptyset$, and initialize the algorithm via an unconstrained $K$-means clustering of the rows of $\widehat X$;  Denote the cluster centers via $\{\mu_i\}_{i=1}^K$ and the cluster assignment vector via 
    $$b^{(K)}_v=\begin{cases}
    \text{argmin}_{i\in[K]}\|\widehat X_v-\mu_i\|&\text{ if }v\in\mathfrak{p}_1;\\
    0&\text{ if }v\in\mathfrak{p}_2;
    \end{cases}
    $$
    \item[ii.] Iterate the following two $K$-means adjacent steps until stopping conditions are met
    \begin{itemize}
        \item[a.] Update the $K$ cluster centers $$\mu_i=\frac{1}{|\{v\in \mathfrak{p}_1\text{ s.t. }b^{(K)}_v=i\}|}\sum_{\{v\in \mathfrak{p}_1\text{ s.t. }b^{(K)}_v=i\}}\widehat X_v$$
        \item[b.] For each $v\in V(G_2)$, if 
        $$\min_i \|X_v-\mu_i|<r^*,
        $$ set $v\in\mathfrak{p}_1$ and $b^{(K)}_v=i$.  
        Else, set $v\in\mathfrak{p}_2$ and $b^{(K)}_v=0$.
    \end{itemize}
\end{itemize}
Note that, in practice we iterate Step ii.\@ until the cluster assignments do not change or a maximum number of iterates has been met. 
We often run the above clustering multiple times (keeping the clustering minimizing Eq.~\eqref{eq:opt}) to account for randomness in the initialization of the $K$-means algorithm.
For an example of this algorithm in practice, see Figure \ref{fig:whitenoise}.}
\end{rem}

%%%%%%%%%%%%%%%%%%%%
%%%%%%%%%%%%%%%%%%%%
\subsection{Regularizing multiple noise sources}
\label{general_setting}
%%%%%%%%%%%%%%%%%%%%
%%%%%%%%%%%%%%%%%%%%
Heterogeneous and multimodal noise settings are very common when working with real data.
The two above-described cleaning methods can be strung together seamlessly to potentially ameliorate multiple noise sources simultaneously.
Consider the setting where $G_2$ is contaminated by both the block-noise of Section \ref{sec:blocknoise} and the diffuse noise of Section \ref{sec:difcon}.
This represents an idealized version of contamination that is both structured (here, designed to obfuscate the true graph model in model space) and unstructured.
Combining the two regularization strategies outlined above yields a two-step approach in which we first clean out the unstructured noise using the robust $K$-means method and then use our block-noise regularization algorithm to trim the structured noise. 
Below we present simulation experiments showing the performance of the VN task using our two-step cleaning procedure (again using regular orthogonal Procrustes rather than indefinite orthogonal Procrustes). 
Note that we will often compare performance post-regularization to a non-regularized VN procedure that operates via: using seeded vertices to align the embeddings of the two networks (without trimming) and nominating based on Step 6 of the procedure in Section  \ref{sec:blkreg}.

We first consider the contaminated SBM model from Section \ref{sec:old_v_new} with $m=1000$ noise vertices and $\rho=0.7$.
To this, we add $500$ additional white noise vertices sampled from (suitably rotated to yield feasible latent positions) the positive orthant in $\mathbb{R}^6$.
In our cleaning procedures, we consider $d_1=2$, $d_2=6$, and $K=6$.
We use the true number of clusters in the initial \texttt{MClust} clustering steps (Step 2 of Algorithm \ref{alg:block_reg}).
 Figure \ref{fig:2stage} shows the performance on the VN task after the implementation of the two-stage regularization procedure. 
 In the figure, we again plot on the $y$-axis the number of vertices in $G_1$ (when considered as the vertex of interest) with their corresponding vertex of interest ranked in the top $x$. 
 In the (L) panel, we plot the absolute performance of the 2-stage cleaning regularization procedure; in the (R) panel, we plot the difference in performance of the core spectral VN procedure post-regularization versus without regularization (post-pre).
 Note that the post-cleaning method does not require seeds, while the pre-cleaning method does (embed the graphs, seeded Procrustes to align, cluster and rank based on the computed Mahalanobis distance).
    Each grey line represents one of 30 Monte Carlo simulations, with the average performance over all MC plotted in black (L) or red (R).  The blue line represents chance performance.
We see that our 2-stage cleaning procedure is effective at mitigating the effect of the noise, even without seeds, and allows for significantly improved nomination performance versus running the core VN procedure sans cleaning.
This confirms our presumption that our regularization method proves to be effective in retrieving most of the original data signal, at least in simulation.

\begin{figure}[t!]
    \centering
    \includegraphics[width=1\textwidth]{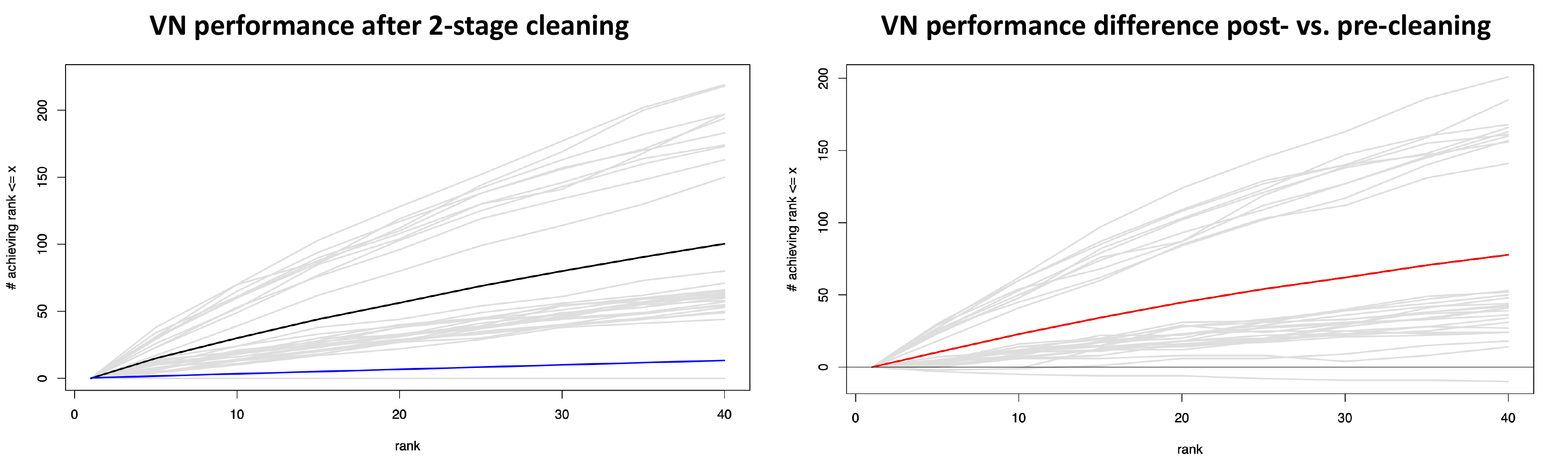}
    \caption{Vertex Nomination performance after the implementation of the two-stage regularization with $\lambda=0.2$. We plot on the $y$-axis the number of vertices in $G_1$ (when considered as the vertex of interest) with their corresponding vertex of interest ranked in the top $x$. 
    In the (L) panel, we plot the absolute performance of the 2-stage cleaning regularization procedure; in the (R) panel, we plot the difference in performance post-regularization versus without regularization (post-pre).
    Each grey line represents one of 30 Monte Carlo simulations, with the average performance over all MC plotted in black (L) or red (R).  The blue line represents chance performance.
    }
    \label{fig:2stage}
\end{figure}
Seeds are often an algorithmic \emph{luxury} and not always available in real-life settings. 
Therefore, presenting a version of our algorithm that does not require seeds is a highly advantageous task. 
A natural question though is what is lost (performance-wise) in the un-seeding?
While it is clear that optimally using seeds will always yield enhanced (or, at least, no worse) performance, this is not necessarily the case in the present algorithmic setting.
In the VN procedure outlined in Algorithm \ref{alg:block_reg}, seeds would be incorporated in Step 4, where seeded-Procrustes would replace the unseeded alignment of the cluster centers.
Surprisingly, incorporating seeds in this (natural) way yields significantly poorer algorithmic performance in simulations. With the above setup, we consider the effect of incorporating seeds in Figure \ref{fig:seed_stuff}.
 In each panel, we plot on the $y$-axis the number of vertices in $G_1$ (when considered as the vertex of interest) with their corresponding vertex of interest ranked in the top $x$. 
    In the (L) panels, we plot the absolute performance of the 2-stage cleaning regularization procedure with (top) and without (bottom) seeds; in the (R) panel, we plot the difference in performance with and without seeds (with-without).
    Each grey line represents one of 30 (paired) Monte Carlo simulations, with the average performance over all MC plotted in black (L) or red (R).  The blue lines on the left represent chance performance.
    We see significant performance improvement when applying the block-regularization procedure (as part of the 2-stage pipeline) without seeds; we postulate that this is because of the de-noising due to averaging in the estimated cluster centers versus the relatively noisy latent positions of the individual seeded vertices.
\begin{figure}[t!]
    \centering
    \includegraphics[width=1\textwidth]{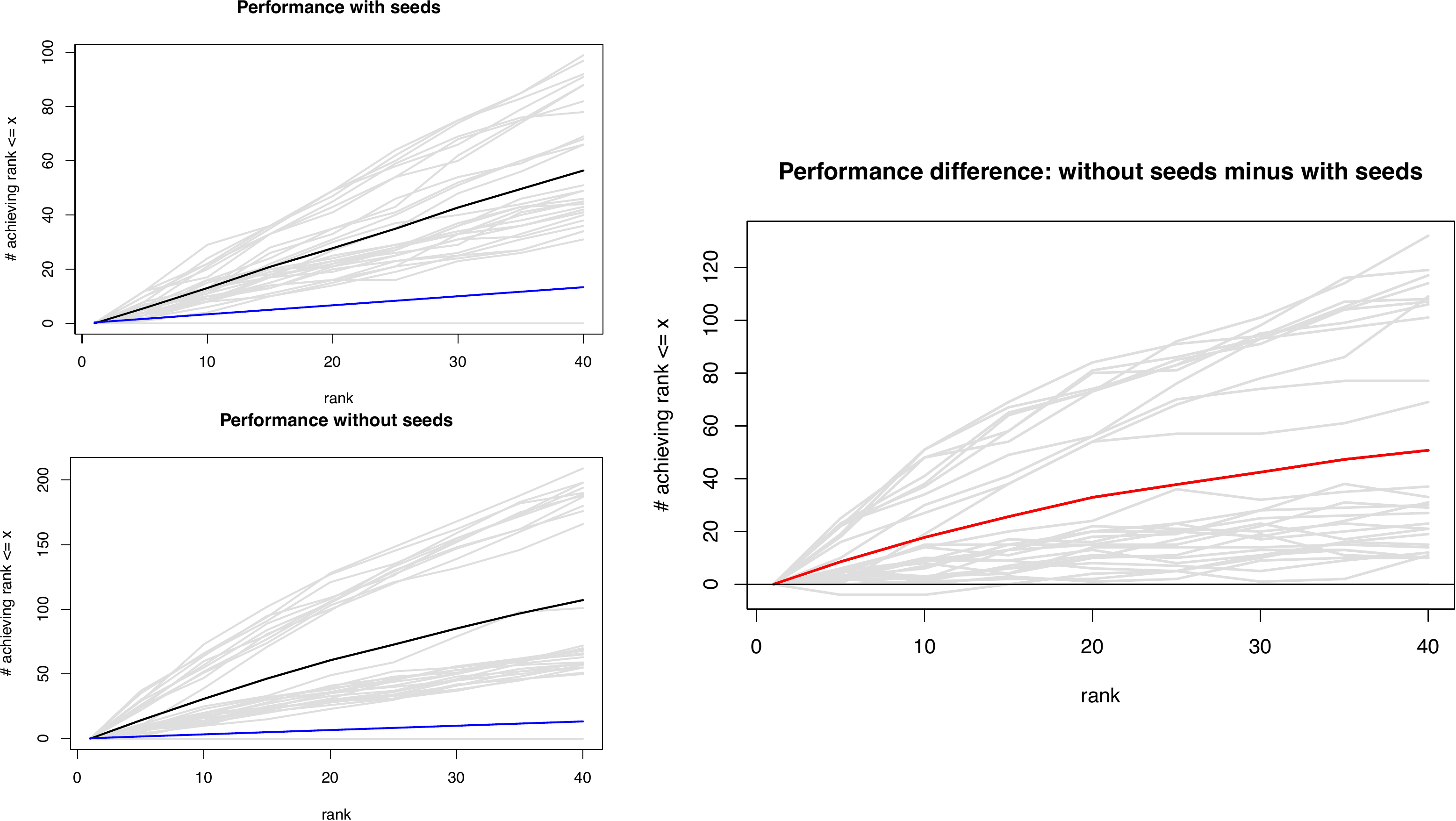}
     \caption{Vertex Nomination performance after with and without seeds. 
     In each panel, we plot on the $y$-axis the number of vertices in $G_1$ (when considered as the vertex of interest) with their corresponding vertex of interest ranked in the top $x$. 
    In the (L) panels, we plot the absolute performance of the 2-stage cleaning regularization procedure with (top) and without (bottom) seeds; in the (R) panel, we plot the difference in performance with and without seeds (with-without).
    Each grey line represents one of 30 (paired) Monte Carlo simulations, with the average performance over all MC plotted in black (L) or red (R), with $\lambda$ chosen to be $0.2$ in the robust $k$-means cleaning.  The blue lines on the left represent chance performance.}
    \label{fig:seed_stuff}
\end{figure}

%%%%%%%%%%%%%%%%%%%%
%%%%%%%%%%%%%%%%%%%%
\section{Real data experiments}
\label{sec:realdata}

In this section we present experiments based on three real data sets: the High School Friendship social network dataset from \cite{Friend}, connectomic Brain Data registered via the common DS data template \cite{brains}, and the political blogs data of \cite{adamic2005political}. In both these settings, we describe how the algorithms we propose are used to help mitigate the affect of an adversary in the respective settings. 

\subsection{VN on High School Friendship Social Networks}
\label{sec:friend}

\begin{figure}
    \centering
    \includegraphics[width=1\textwidth]{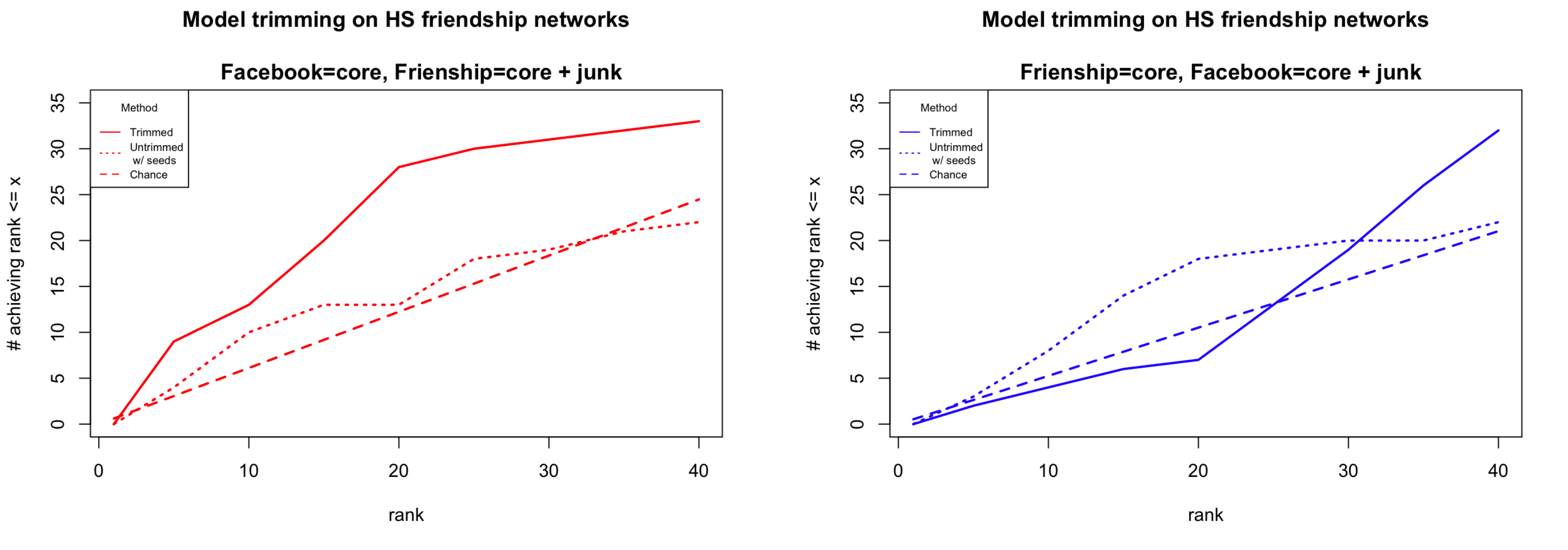}
    \caption{
  Block regularization for Vertex Nomination across the HS Friendship Social Networks.
  In each panel, we plot on the $y$-axis the number of vertices in the uncontaminated network (when considered as the vertex of interest) with their corresponding vertex of interest ranked in the top $x$. 
    In the left (resp., right) panel, the uncontaminated network is the core Facebook (resp., core Friendship) network and the contaminated graph is the full Friendship (resp., full Facebook) network.
    The solid lines represent performance post-cleaning, the dashed lines chance and the dotted lines performance without cleaning (with seeds).}
    \label{fig:two_friends}
\end{figure}

The data was collected from high school students at Lycée Thiers in Marseilles, France \cite{Friend}, and is composed of two different friendship social networks, each encapsulating a different interaction dynamic. 
\begin{itemize}
   \item[i.] \textbf{Facebook Friendship}\\
    Students were asked to use the Netvizz application which then created the network of Facebook friendship relations between the Facebook friends of each student who uses the application. 
    Since only 17 students gave access to their local network, it was not possible to build the entire network of Facebook relationships between the students through this information. Instead, a list of pairs of students, (``known-pairs" for which the existence or not of a friendship relation on Facebook was known), was used. The number of students considered here was 156.
   \item[ii.] \textbf{Self-reported Friendship}\\
Students were also asked to complete a survey in which they were asked to name their friends in the high school. We consider 134 friendship surveys in our analysis. 
\end{itemize}

\noindent Specifically, our data set consists of two graphs, $G_1$ (the self-reported friendship graph; we will refer to this as the Friendship network) and $G_2$ (the Facebook friendship graph), containing 134 and 156 vertices respectively.
In both graphs, the vertices represent students and the edges represent their friendship, and 
there is a core set of 82 vertices appearing in both graphs.
In $G_1$, two vertices are adjacent if at least one of the students reported to be friends with the other one, whereas in $G_2$ two vertices are adjacent if the corresponding students are friends on Facebook.

In this setting, we consider two experimental setups. In the first one, we only consider the core vertices in $G_1$ and treat the non-core vertices of $G_2$ as a contamination. We then use our new block-contamination trimming method to regularize $G_2$ and analyze the performance of our VN procedure on the regularized graph. 
In the second setting, we ``switch roles". We let $G_1$ be the contaminated graph and consider $G_2$ to be the clean graph, i.e. we only consider the core vertices of $G_2$. Results are summarized in Figure \ref{fig:two_friends}, and we note here that the method is implemented without seeds in both cases (contrasting with the similar VN analysis in \cite{patsolic2017vertex} that was seed dependent).
seems to depend on which graph is contaminated. 
From the figure , we see that performance here is highly dependent on the nature of the model ``noise." 
Indeed, when the contaminated graph is the Friendship graph, it is clear that the trimmed setting performs much better than both the 
untrimmed setting and chance. This seems to imply that our algorithm helps mitigate the affect of the adversary and retrieves most of the original graph structure in this setting.
In contrast, when the contaminated graph is the Facebook graph, the VN task performance after trimming is not that much better than before trimming (or even chance). Two plausible explanations for these differences are as follows. Firstly the optimum number of clusters chosen by \texttt{MClust} for the Facebook core (resp., junk) is 4 (resp., 8) and for the Friendship core (resp., junk) is 8 (resp., 7). Model trimming is thus possibly ineffective when $G_1$ is taken to be the Friendship core network while the contaminated $G_2$ is taken to be the Facebook graph as they both have $8$ estimated blocks.  The second explanation is that there is significant difference in the nature of online friendships versus reported friendships such as sampling bias inherent to friendship survey data \cite{Friend}.

\begin{figure}
    \centering
    \includegraphics[width=1\textwidth]{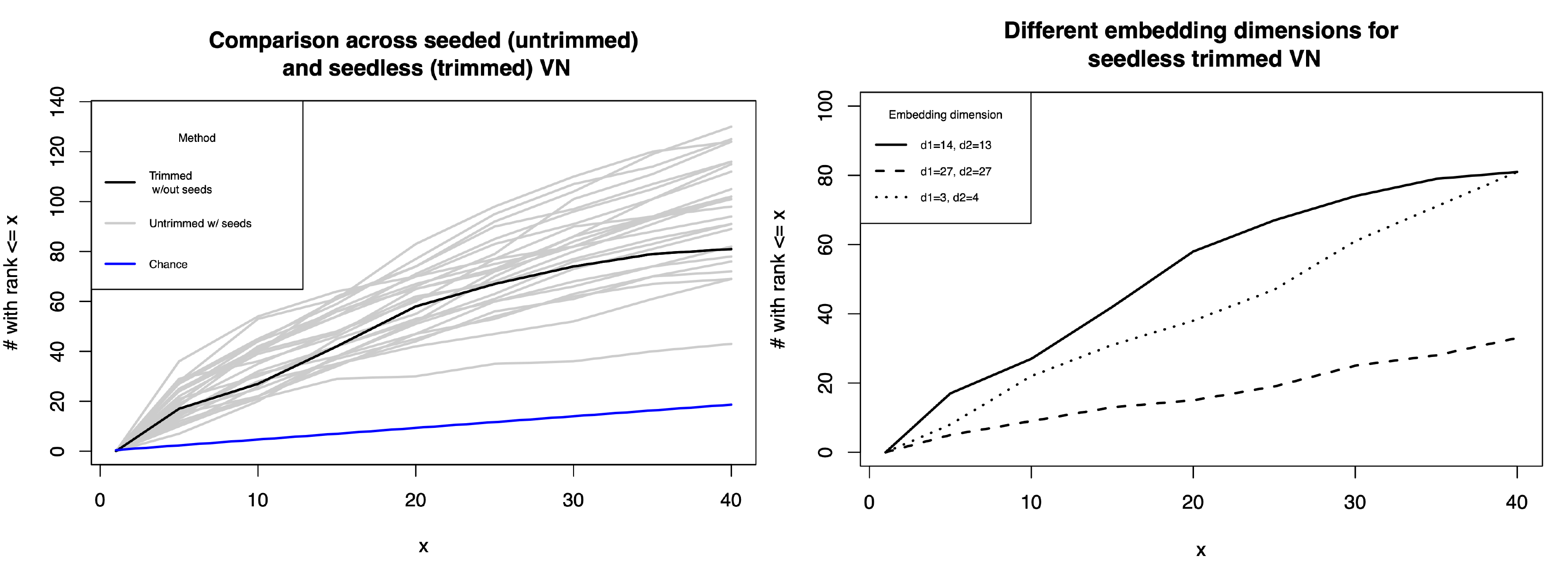}
    \caption{The left Figure shows the performance of the Vertex Nomination task in the brain data setting when trimming the noise vertices without seeds (grey lines) and when no trimming is performed and seeds are used. These performances are compared to chance. One can conclude that the trimmed setting seems to perform at least as good as the untrimmed setting, but without the need of seeds. The right Figure shows performance of the Vertex Nomination task for different embedding dimensions when computing the ASE of the graphs. } 
    \label{seednoseed}
\end{figure}

\subsection{Brain Data}
The data used in this experiment is a subset of the Connectivity-based Brain Imaging Research Database (C-BIRD) at Beijing Normal University (BNU). It contains data from 57 healthy young volunteers, 30 males and 27 males of ages 19-30, who completed two MRI scan sessions within an interval of approximately 6-weeks. A graph was generated from each of the two sessions. 
In our experiments, vertices of the brain graph represent voxel regions in the brain (after they have been registered to a common template), with edges measuring neural connection amongst regions. 
In our experiment setup, we consider two of the above-described brain scans of the same individual, which are inherently correlated, and define our graphs $G_1$ and $G_2$ as follows.
$G_1$ is a subgraph of the first brain scan graph, in which we only consider the vertices in the regions that lie in the left hemisphere of the individual. For $G_2$ we consider the whole second brain scan graph. Based on the regularization theory presented above, one can consider the second graph, $G_2$ to be a contaminated version of the graph $G_2'$, where $G_2'$ consists of the vertices in the second brain scan that occur in the regions that lie in the left hemisphere of the brain. The rest of the vertices can be considered ``contaminated'' vertices, having no analogue in $G_1$.
Note that one can consider a homology between the two hemispheres of the brain and thus the contamination described above can be thought of as structured noise rather than simply white noise.  
Our goal is to find the vertices in the second scan of the individual, which correspond to the vertices in the regions in the left hemisphere of the brain from the first scan.

There are 70 regions in each brain (derived via the Desikan atlas).  
Regions $1-35$ occur in the left hemisphere of the brain. Hence, we can easily create the induced subgraph $G_1$ by accessing the vertices that lie in these regions.  
Using \texttt{MClust} to estimate the block structure in $G_1$ and $G_2$, we use Algorithm \ref{alg:block_reg} to trim the ``block-structured'' noise from $G_2$.
We then compare the performance of our algorithm using \textit{no seeds} with the case when no trimming is performed and 5 seeds are being used to align the embedded networks 
Figure \ref{seednoseed} (left) shows the performance of the above described experiments with (gray) and without (black) seeds, compared to chance performance (blue). The experiments with seeds are repeated $25$ times, each time using a randomly selected seeds set. 
As one can easily deduce from the figure, both methods (trimmed with no seeds and untrimmed with seeds) perform better than chance.
Moreover, the trimmed setting often outperforms the untrimmed setting with seeds. This is encouraging, as seeds are expensive and hence being able to retrieve information without being provided seeds is a very much desired outcome.

A common source of error in experiments including embeddings arises when choosing the dimension of the embedding. We compare algorithmic performance of our experiment by choosing different embedding dimensions when computing the ASE of the graphs. In Figure \ref{seednoseed} (RIGHT) one can see that $d_1=14$ and $d_2=13$ (chosen by finding the second elbow of the SCREE plot of $G_1$ and $G_2$ \cite{athreya2017statistical}), outperforms embedding methods that underestimate (using the first SCREE elbow) or overestimate (using the third SCREE elbow) the embedding dimension.

\subsection{Political blogs}
\label{sec:blogs}

In this section, we consider the robust K-means procedure followed by nomination in the network of hyperlinks between weblogs on US politics \cite{polblog_paper}.
Specifically, the posts of 40 blogs were analyzed over the period of two months preceding the U.S. Presidential Election of 2004.
The blogs are natural segmented into two parts---liberal-leaning and conservative-leaning---and edges in the network capture how these blogs referred to each other both withing and across communities. 
\begin{figure}[t!]
    \centering
    \includegraphics[width=0.9\textwidth]{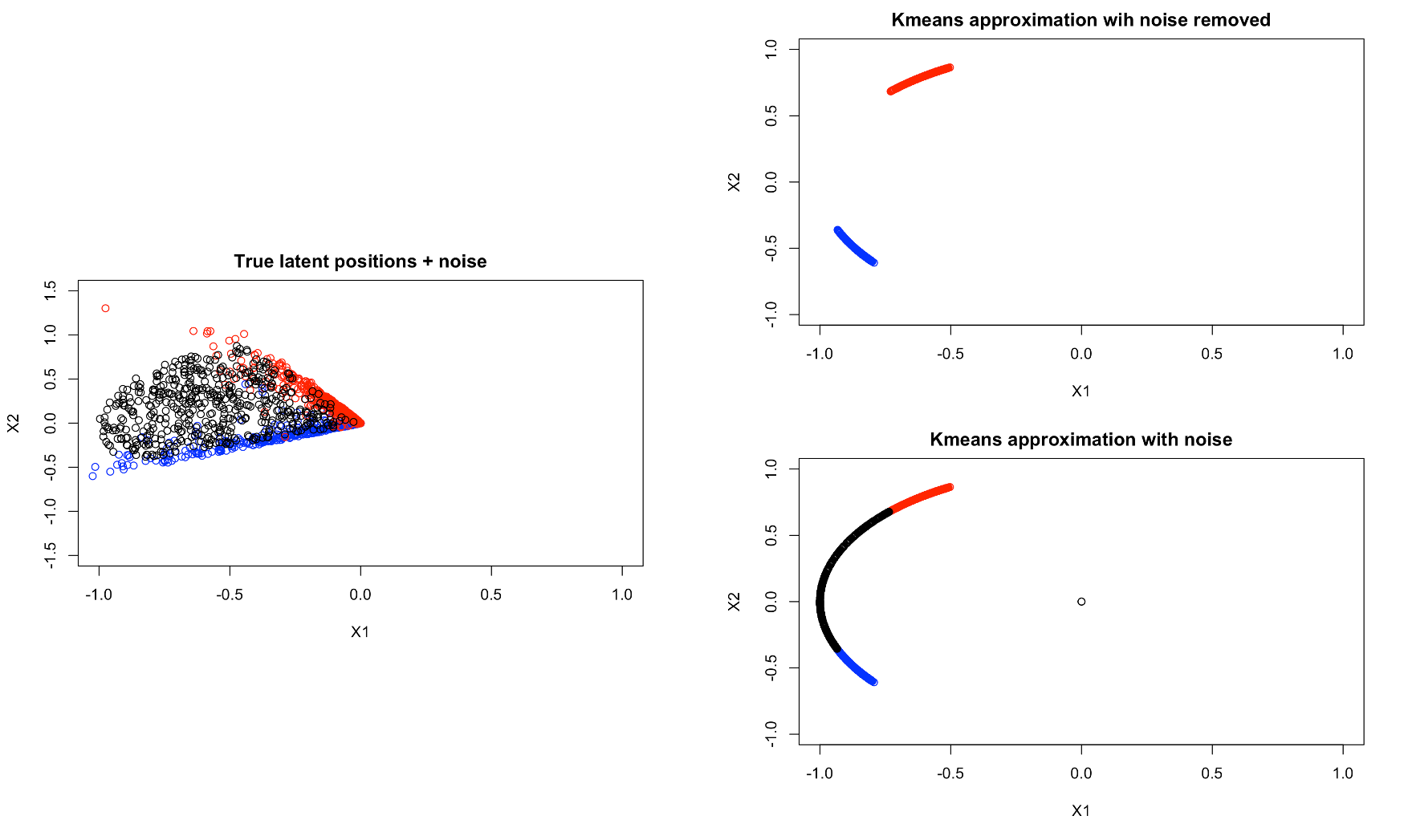}
    \caption{RIGHT: The true latent positions (liberal/conservative) plus added noise. LEFT: The clustering results (projected on the unit sphere) after the implementation of our kmeans method, with noise (above) and without the noise (below).}
    \label{fig:polblogs}
\end{figure}
In order to test the performance of our robust K-means algorithm, we adopt the following synthetic data approach.
We first consider the blog graph $G_1$ (the original data) as a 2-dimensional RDPG and estimate the latent positions of the network using ASE.
To these points, we add an additional sample of $m=500$ points 
drawn uniformly at random from the 2-sphere (first orthant) to artificially create noise data points. 
We then sample $G_2$ as an RDPG from the signal plus noise latent positions.
Here, $G_2$ represents a version of $G_1$ corrupted by two noise sources, the $m$ diffuse noise vertices and the noise from the RDPG resampling procedure.
We then proceed as follows:
\begin{itemize}
    \item[i.] Re-embed $G_2$ (call this $\hbX_2$), and project the embedded data to the sphere (this is a common tactic in clustering sparse graphs
    \cite{lyz,rubin2017statistical}); use the robust 2-means clustering to clean the noise vertices from $\hbX_2$.  Call the cleaned (unprojected) data $\hbX_{2,t}$.  The input and output of the 2-means clustering is plotted in figure \ref{fig:polblogs}.
    \item[ii.] Cluster the embedding of $G_1$ (call it $\hbX_1$) into 2 clusters and use orthogonal Procrustes to align these cluster centers to those obtained in the robust 2-means clustering of $\hbX_2$
    \item[iii.]  Use \texttt{MClust} to cluster the combined data (the aligned $\hbX_1$ and $\hbX_{2,t}$) and rank matches based on our Mahalanobis distance VN procure.
\end{itemize}
In Figure \ref{fig:polblogs_rank} we plot (in black) the precision at $k$ of our ranking scheme where we consider a positive match for precision purposes as follows: when nominating matches for a liberal (resp., conservative) blog, we consider a nominated blog of the same political persuasion as a positive match.
Chance precision is plotted in red.
In the figure, the precision at $k$ is averaged over all liberal (resp., conservative) blogs in the left (resp., right) panel, and we consider $k \in \{1,...,100\}$.
As can be seen on Figure \ref{fig:polblogs_rank}, we observe that our algorithm is performing well, and highly exceeds that of chance when considering both liberal and conservative vertices.

\begin{figure}[t!]
    \centering
    \includegraphics[width=0.9\textwidth]{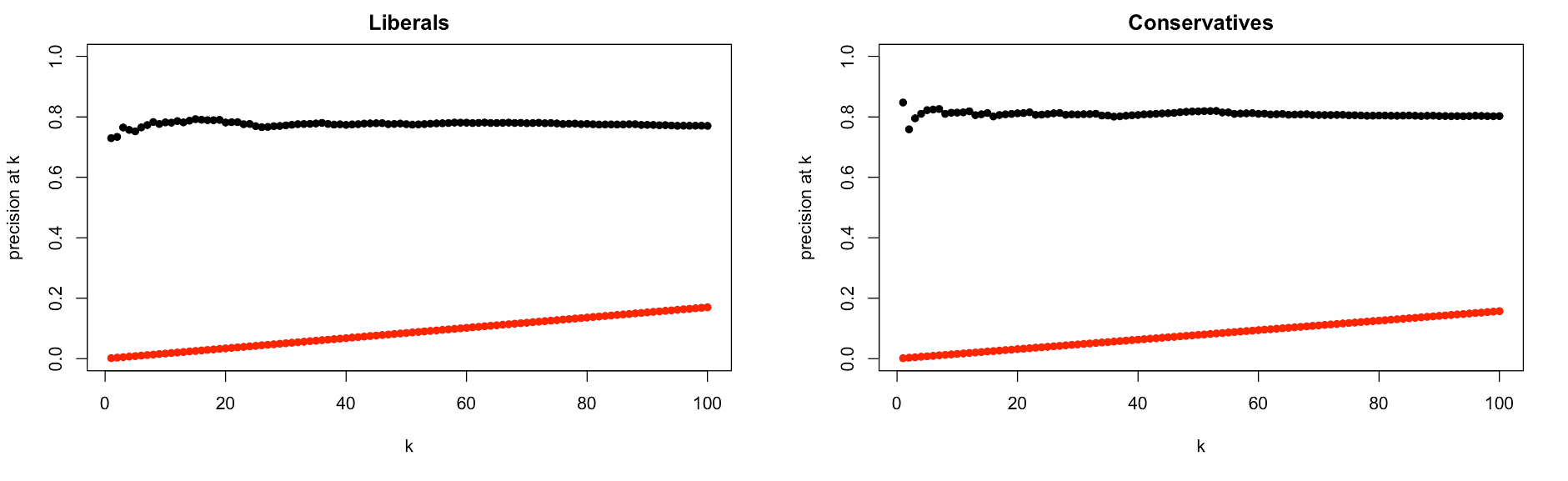}
    \caption{The mean precision at k of each vertex representing a liberal blog post (when considered as the vertex of interest) for different values of k, namely $k={1,...,50}$. Specifically, for each vertex of interest (when the vertex represents a liberal blog post), we calculate the number of other liberal blog posts that rank in the top k and divide this number by k. We proceed similarly for each vertex representing a conservative blog post}
    \label{fig:polblogs_rank}
\end{figure}

\section{Discussion and Future Work}
Adversarial attacks on networks are phenomena that unfortunately happen frequently and thus, trimming methods that help mitigate the affect of the contamination are much sought after. After presenting the adversarial model of \cite{agterberg2019vertex}, we introduced a novel trimming method, where the trimming happens in model space, rather than graph space. We extended this procedure to a setting where we might have two types of noise: structured and unstructured (white noise). 
Here, we further introduce a robust kmeans algorithm, where a fixed maximum cluster radius is used to gather points into clusters and``clean out" (ideally) the noise points, whose distance from any cluster center is greater than the maximum radius. A tuning parameter $\lambda$ helps with this process, by requiring the payment of a higher penalty in the objective function for the points that do not lie in any of the clusters. 
Experiments show that the two cleaning methods combine seamlessly and succeed in retrieving adequate information of the contaminated graph in both simulated and real-life data.

The above exploration is only the tip of the iceberg in an area where a lot of research can and should be done. 
The Stochastic Block Model is a simple, yet rich model to work with, but theory can be developed for more general structured noise settings. 
A natural next step would be exploring regularization methods in other, more complex contaminated models. 
In the latter case, an option might be to provide a few different robust algorithms, created precisely to be implemented in different contexts, and see which one gives the best results. 
Note that the regularization methods mentioned above act globally on the graph, but one can also consider local regularizers, which is another open area of research. 
For contamination procedures that act locally, regularization methods performed at a local level might prove to be effective in many settings. 
Another path to explore is lifting the above problem to higher dimensions, including multilayered graphs and time series. 
In this case, tensors are a natural framework to work in and provide the relevant theory that will support a thorough exploration of the subject. 
\vspace{3mm}

\noindent{\bf Acknowledgment:} This material is based on research sponsored by the Air Force Research Laboratory and Defense
Advanced Research Projects Agency (DARPA) under agreement number FA8750-20-2-1001. The U.S. Government is
authorized to reproduce and distribute reprints for Governmental purposes notwithstanding any copyright notation thereon.
The views and conclusions contained herein are those of the
authors and should not be interpreted as necessarily representing the official policies or endorsements, either expressed or
implied, of the Air Force Research Laboratory and DARPA or
the U.S. Government.
\bibliographystyle{plain}
\bibliography{mybib}

\appendix

\section{Supporting results and pseudocode}

Herein we collect the supporting and proofs.

\subsection{Consistency of block probability estimates}
\label{app:b's}

In the theory below, we will make extensive use of the result, Theorem 3, from \cite{rubin2017statistical}, which states that if both second moment matrices ($\mathbb{E}(\bX I_{p,q} \bX^T)$ for $\bX\sim F$ or $\bX\sim F^{(c)}$) are full rank, then there exists a sequence of
indefinite 
orthogonal matrices $\mathbf{Q}_n\in\mathcal{O}(p,q)$ 
(so that $\mathbf{Q}_n$ satisfies $\mathbf{Q}_n^TI_{p,q}\mathbf{Q}_n=I_{p,q}$) 
and a universal constant $\alpha>0$ such that if the sparsity factor $\nu_n=\omega\left(\frac{\log^{4\alpha}n}{n}\right)$, then
\begin{equation}
\label{eq:IndConc2}
\max_{i=1,2,\ldots,n}\|\mathbf{Q}_n\hat X_i-\nu_n^{1/2}X_i\|=O_{\mathbb{P}}\left( \frac{\log^{\alpha}n }{n^{1/2}}\right)
\end{equation}
The constant $\alpha$ appearing in Eq. \ref{eq:IndConc2} will be used throughout the appendix and proofs contained therein.

Let the true latent position matrix for $\mathbf{B}$ be denoted $\boldsymbol{\mu}\in\mathbb{R}^{K_1\times d_1}$ and of $\mathbf{B}^{(c)}$ be denoted $\boldsymbol{\mu}^{(c)}\in\mathbb{R}^{K_2\times d_2}$. 
Let the distribution over latent positions for the uncontaminated (resp., contaminated) network be denoted 
$$F=\sum_{i=1}^{K_1}\pi_i \delta_{\mu_i}
\text{ (resp., }F^{(c)}=\sum_{i=1}^{K_2}\pi^{(c)}_i \delta_{\mu^{(c)}_i}).$$
Suppose further that  $\min_i\pi_i=\Theta(1)$ and similarly for $\min_i\pi_i^{(c)}$.
We further assume that the latent positions in the uncontaminated model (and in the contaminated model) satisfy (for $\alpha>0$ defined in Eq. \ref{eq:IndConc2})
$$|\mu_i-\mu_j|=\omega\left(\frac{\log^{\alpha}n }{n^{1/2}} \right).$$

\noindent Our proof will proceed with a MSE-based (Mean Square Error-based) clustering heuristic rather than the (more difficult to analyze) GMM-based clustering in our algorithm. 
Note that the GMM-based clustering is more appropriate in the current setting \cite{rubin2017statistical}, and often achieves superior performance in application \cite{tang2018limit}.
The MSE clustering
of the rows of $\mathbf{\hat X}$ into $K$ clusters provides
\begin{align*}
\hat{\mathbf{C}} &= \min_{\mathbf{C}\in \mathcal{C}_K}
\|\mathbf{C}-\mathbf{\hat X}\|_F, \text{ where }\\
\mathcal{C}_K &:=\{\mathbf{C} \in \mathbb{R}^{n\times d}: \mathbf{C} \text{ has K distinct rows}\},
\end{align*}
as the optimal cluster centroids for the $K$ clusters.

As mentioned in the main text, we claim that $\hbb$ and $\hbbc$ are suitable estimates of our original block membership matrices $\mathbf{B}$ and $\mathbf{B}^{(c)}$. The following Theorem supports this claim. 
\begin{theorem}
\label{thm:thm1}
Let $\boldsymbol{\hat \mu}$ be the matrix of cluster centroids provided by the optimal MSE-based clustering of $\mathbf{\hat X}$ in the uncontaminated network (with $\boldsymbol{\hat \mu}^{(c)}$ defined analogously for the contaminated network).
Given the block probability matrices $\mathbf{B}$ and $\mathbf{B}^{(c)}$ for the uncontaminated and contaminated networks, with assumptions on $\nu_n$, $\pi_i$, $\pi_i^{(c)}$, $\mu_i$ and $\mu^{(c)}_i$ as above we have that
$$ \| \underbrace{\boldsymbol{\hat \mu}I_{p_1,q_1}\boldsymbol{\hat \mu}^T}_{:=\hat{\mathbf{B}}} - \nu_n\mathbf{B} \|_F = O_{\mathbb{P}}\left( \frac{K_1\log^\alpha n }{n}\right) \quad 
   \| \underbrace{(\boldsymbol{\hat \mu}^{(c)})I_{p_2,q_2}(\boldsymbol{\hat \mu}^{(c)})^T}_{:=\hat{\mathbf{B}}_c}  - \nu_n\mathbf{B}^{(c)} \|_F =O_{\mathbb{P}}\left( \frac{K_2 \log^\alpha n }{n}\right)$$
\end{theorem}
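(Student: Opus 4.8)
The plan is to reduce the Frobenius-norm error on the block probability matrices to the uniform (row-wise) control on the estimated latent positions furnished by Eq.~\eqref{eq:IndConc2}, passing through the accuracy of the MSE-based clustering. I will focus on the uncontaminated case, as the contaminated case is identical up to replacing $K_1,p_1,q_1,F,\boldsymbol{\mu}$ by $K_2,p_2,q_2,F^{(c)},\boldsymbol{\mu}^{(c)}$. First I would invoke Eq.~\eqref{eq:IndConc2}: there is an indefinite orthogonal $\mathbf{Q}_n$ with $\max_i\|\mathbf{Q}_n\hat X_i-\nu_n^{1/2}X_i\|=O_{\mathbb P}(\log^\alpha n/n^{1/2})$. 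Since the true latent positions $X_i$ take only $K_1$ distinct values $\mu_1,\dots,\mu_{K_1}$, each with mass $\pi_i=\Theta(1)$, and since the separation assumption $|\mu_i-\mu_j|=\omega(\log^\alpha n/n^{1/2})$ dominates the perturbation radius, the rows of $\mathbf{Q}_n\hat{\mathbf X}$ form $K_1$ well-separated point clouds, each of radius $O_{\mathbb P}(\log^\alpha n/n^{1/2})$ around $\nu_n^{1/2}\mu_k$.

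Next I would argue that the optimal $K$-means solution $\boldsymbol{\hat\mu}$ must, after applying $\mathbf{Q}_n$, recover exactly this cluster partition for $n$ large, so that (up to a label permutation, which I absorb) each centroid $\mathbf{Q}_n\hat\mu_k$ is the sample mean of points within $O_{\mathbb P}(\log^\alpha n/n^{1/2})$ of $\nu_n^{1/2}\mu_k$, hence $\|\mathbf{Q}_n\hat\mu_k-\nu_n^{1/2}\mu_k\|=O_{\mathbb P}(\log^\alpha n/n^{1/2})$ for every $k\in[K_1]$. To justify that the optimal clustering does not merge or split the true clusters, I would use a standard comparison argument: the ``oracle'' clustering (assign each vertex to its true block) achieves objective $O_{\mathbb P}(n\cdot\log^{2\alpha}n/n)=O_{\mathbb P}(\log^{2\alpha}n)$, whereas any clustering that misgroups a constant fraction of vertices across well-separated blocks incurs cost $\omega(n\cdot\log^{2\alpha}n/n)$ by the separation assumption — a contradiction with optimality; a more careful version of this handles misgrouping of even a single vertex. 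This is the step I expect to be the main obstacle, since making the ``optimal $K$-means recovers the true partition'' claim fully rigorous (rather than invoking it as folklore) requires some care with the $2\to\infty$-type control and with the constants in the separation, and the paper may instead cite an existing perfect-clustering result for ASE under the SBM.

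With centroid accuracy in hand, I would finish by a telescoping/bilinearity computation. Writing $\hbb=\boldsymbol{\hat\mu}I_{p_1,q_1}\boldsymbol{\hat\mu}^T$ and noting $\mathbf{Q}_n^TI_{p_1,q_1}\mathbf{Q}_n=I_{p_1,q_1}$, we have $\hat B_{k\ell}=\hat\mu_k^T I_{p_1,q_1}\hat\mu_\ell=(\mathbf{Q}_n\hat\mu_k)^TI_{p_1,q_1}(\mathbf{Q}_n\hat\mu_\ell)$, while $\nu_n B_{k\ell}=(\nu_n^{1/2}\mu_k)^TI_{p_1,q_1}(\nu_n^{1/2}\mu_\ell)$. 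Setting $e_k=\mathbf{Q}_n\hat\mu_k-\nu_n^{1/2}\mu_k$ (so $\|e_k\|=O_{\mathbb P}(\log^\alpha n/n^{1/2})$) and using $\|\mu_k\|,\nu_n\le 1$ so $\|\nu_n^{1/2}\mu_k\|=O(1)$, the difference of each entry is $(\nu_n^{1/2}\mu_k)^TI_{p_1,q_1}e_\ell+e_k^TI_{p_1,q_1}(\nu_n^{1/2}\mu_\ell)+e_k^TI_{p_1,q_1}e_\ell$, which is $O_{\mathbb P}(\log^\alpha n/n^{1/2})$ entrywise since $\|I_{p_1,q_1}\|=1$. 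Summing the squares over the $K_1^2$ entries and taking the square root gives $\|\hbb-\nu_n\mathbf B\|_F = O_{\mathbb P}(K_1\log^\alpha n/n^{1/2})$. (I note the theorem statement writes the rate as $K_1\log^\alpha n/n$; I would double-check whether the intended scaling uses $\nu_n$-normalized quantities or a per-coordinate $n^{-1}$ bound — in either case the argument is the same telescoping, only the bookkeeping of the $\nu_n$ and $n$ powers changes — and state the bound consistently with Eq.~\eqref{eq:IndConc2}.) The contaminated bound follows verbatim with $K_2$ in place of $K_1$.
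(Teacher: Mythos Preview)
Your argument is internally sound, but it delivers the rate $O_{\mathbb P}(K_1\log^\alpha n/n^{1/2})$ rather than the $O_{\mathbb P}(K_1\log^\alpha n/n)$ asserted in the theorem; the discrepancy you flagged is a genuine gap, not a bookkeeping issue. The telescoping step bounds $|\hat B_{k\ell}-\nu_n B_{k\ell}|$ by (essentially) $\|e_k\|+\|e_\ell\|$, and with only the $2\to\infty$ control of Eq.~\eqref{eq:IndConc2} you cannot do better than $\|e_k\|=O_{\mathbb P}(\log^\alpha n/n^{1/2})$ for the centroid error: the max-row bound sees no cancellation when you average over the $\Theta(n)$ rows in a block. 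Reaching the $n^{-1}$ rate requires exploiting that averaging, which in turn requires information about the joint behaviour of the row-wise errors that Eq.~\eqref{eq:IndConc2} alone does not supply.

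The paper takes a different route after the clustering-consistency step (which it handles essentially as you do, by citing an existing perfect-clustering result for ASE under the SBM). Rather than bounding centroid error and telescoping, it writes $\hat B_{k\ell}=\frac{1}{n_k n_\ell}\mathbf s_k^{T}\hat{\mathbf X}I_{p_1,q_1}\hat{\mathbf X}^{T}\mathbf s_\ell$ and invokes a decomposition from \cite{aseff} of $\frac{n}{\nu_n^{1/2}}(\hat B_{k\ell}-\nu_n B_{k\ell})$ into terms of the form $\mathbf s_k^{T}\mathbf E\,\Pi_{\mathbf U}\mathbf s_\ell$ and $\mathbf s_k^{T}\Pi_{\mathbf U}^{\perp}\mathbf E^{2}\mathbf P^{\dagger}\mathbf s_\ell$, where $\mathbf E=\mathbf A-\mathbf P$ and $\Pi_{\mathbf U}$ is the projector onto the signal eigenspace. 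The first term is a sum of $\Theta(n^2)$ independent mean-zero summands with bounded variance, so a second-moment (Markov) argument buys an extra factor of $n^{-1/2}$ beyond what your worst-case row bound can see; the second term is controlled directly by results in \cite{aseff}. This yields $|\hat B_{k\ell}-\nu_n B_{k\ell}|=o_{\mathbb P}(\log^\alpha n/n)$ entrywise, and a union bound over the $K_1^2$ entries finishes. Note also that the sharper rate matters downstream: your $n^{-1/2}$ bound would make Theorem~\ref{thm:Bhat} go through only under the stronger sparsity restriction $\nu_n=\omega(\log^\alpha n/n^{1/2})$, whereas the paper's bound works down to $\nu_n=\omega(\log^{4\alpha}n/n)$.
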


\begin{proof}

The assumption on $\min_i\pi_i$ is sufficient to ensure that (via a simple Hoeffding's inequality) $\min_i n_i=\min_i |\{v:b_v=i\}|=\Theta_{\mathbb{P}}(n)$.
The assumption that 
$$|\mu_i-\mu_j|=\omega\left(\frac{\log^{\alpha}n }{n^{1/2}} \right)$$
then ensures that the optimal MSE clustering yields consistent estimates of  block memberships. 
To wit let the estimated cluster assignment vector for the the optimal MSE clustering routine be denoted $\hat b_v$,
    then   
$\min_{\tau\in S_K}
|\{v \in [n] :  \tau(b_v)\neq \hat b_v\}| =o_{\mathbb{P}}(1)$ (note that the proof is essentially identical to the proof of Theorem 2.6 in \cite{lyz} and is thus omitted).

Adopting the notation of \cite{aseff}, we let (with $d=\text{rank}(\mathbf{B})$ assumed known)
$$\mathbf{A}=\hat{\mathbf{U}} \hat{\mathbf{\Lambda}} \hat{\mathbf{U}}^T+\hat{\mathbf{U}}_\perp \hat{\mathbf{\Lambda}}_\perp \hat{\mathbf{U}}_\perp^T
$$
be the eigendecompostion of $\mathbf{A}$, where the columns of $\hat{\mathbf{U}}\in\mathbb{R}^{n\times d}$ are the eigenvectors associated with the $d$ largest eigenvalues in modulus of $\mathbf{A}$.
Let the associated eigendecompostion of $\mathbf{P}:=\nu_n\mathbf{X}I_{p_1,q_1}\mathbf{X}^T$ (where $\nu_n^{1/2}\mathbf{X}\in\mathbb{R}^{n\times d}$ matrix of latent positions of $\mathbf{A}$) be given by
$$\nu_n\mathbf{X}I_{p_1,q_1}\mathbf{X}^T={\mathbf{U}} {\mathbf{\Lambda}} {\mathbf{U}}^T$$
Define also,
\begin{align*}
    \Pi_{\mathbf{U}}&={\mathbf{U}}{\mathbf{U}}^T;\quad \Pi_{\mathbf{U}}^\perp=\mathbf{I}_n-\Pi_{\mathbf{U};};\\
    \mathbf{P}^\dagger&={\mathbf{U}} {\mathbf{\Lambda}}^{-1} {\mathbf{U}}^T; \quad \mathbf{E}={\mathbf{A}}- {\mathbf{P}}. 
\end{align*}
For each $k\in K$, let the vector of cluster membership for class $k$ in $\mathbf{Y}$ be denoted via $\mathbf{s}_k$, so that $$\mathbf{s}_k(i)=\mathds{1}\{\mathbf{X}_i\text{ is in cluster }k\}.$$
Similarly, for each $k\in K$, let the vector of the estimated cluster membership for class $k$ obtained by clustering $\hat{\mathbf{X}}$ be denoted via $\hat{\mathbf{s}_k}$; with this notation 
$\xi_{1,k}=\frac{1}{\hat n_k}\hat{\mathbf{s}}_k^T\hat{\mathbf{X}}$.

The proof next proceeds with the following decomposition, adapted here from Equation (A.5) in \cite{aseff},
\begin{align}
    \frac{n}{\nu^{1/2}_n}& (\hat{\mathbf{B}}_{kl} - \nu_n\mathbf{B}_{kl})\notag\\
    &=\frac{n }{n_k n_l \nu_{n}^{1/2}} (\mathbf{s}_k^T \mathbf{E} \mathbf{\Pi}_{\mathbf{U}} \mathbf{s}_l + \mathbf{s}_l^T \mathbf{\Pi}_{\mathbf{U}}^{\perp} \mathbf{E} \mathbf{\Pi}_{\mathbf{U}} \mathbf{s}_k)\label{eq:ee1} \\
    & + \frac{n }{n_k n_l \nu_{n}^{1/2}} (\mathbf{s}_k^T \mathbf{\Pi}_{\mathbf{U}}^{\perp} \mathbf{E}^2 \mathbf{P}^{\dagger} \mathbf{s}_l + \mathbf{s}_l^T \mathbf{\Pi}_{\mathbf{U}}^{\perp} \mathbf{E}^2 \mathbf{P}^{\dagger} \mathbf{s}_k) \label{eq:ee2}\\
    & + O_{\mathbb{P}}(n^{-1/2} \nu_n^{-1})\notag
\end{align}
We have from \cite{aseff} that conditional on $\mathbf{P}$, $\nu_n^{1/2}$ times Eq. \ref{eq:ee2} converges a.s. to a constant.
Therefore $\frac{\nu_n^{1/2}}{\log^{\alpha}n}$ times times Eq. \ref{eq:ee2} is $o_{\mathbb{P}}(1)$.
From Equation A.24 in \cite{aseff}, we have that conditional on $\mathbf{P}$, $\mathbf{s}_k^T \mathbf{E} \mathbf{\Pi}_{\mathbf{U}} \mathbf{s}_l + \mathbf{s}_l^T \mathbf{\Pi}_{\mathbf{U}}^{\perp} \mathbf{E} \mathbf{\Pi}_{\mathbf{U}} \mathbf{s}_k$ is the sum of $n(n+1)/2$ independent mean 0 random variables with bounded variance (all variances bounded by $d$ for example).
A simple application of Markov's inequality implies that for any $\epsilon>0$,
\begin{align*}
    \mathbb{P}\left(\frac{\nu_n^{1/2}}{\log^{\alpha}n} \frac{n }{n_k n_l \nu_{n}^{1/2}} |\mathbf{s}_k^T \mathbf{E} \mathbf{\Pi}_{\mathbf{U}} \mathbf{s}_l + \mathbf{s}_l^T \mathbf{\Pi}_{\mathbf{U}}^{\perp} \mathbf{E} \mathbf{\Pi}_{\mathbf{U}} \mathbf{s}_k|>\epsilon\right)&\leq \frac{O(n(n+1)/2)}{\frac{\log^{2\alpha}n}{\nu_n} \frac{n^2_k n^2_l \nu_{n}}{n^2 }\epsilon^2}\\
    &=O_{\mathbb{P}}\left(\frac{1}{\log^{2\alpha}n} \right)=o_{\mathbb{P}}(1)
\end{align*}
We then have that 
$$\frac{\nu_n^{1/2}}{\log^{\alpha}n} \left(\frac{n}{\nu^{1/2}_n} (\hat{\mathbf{B}}_{kl} - \nu_n\mathbf{B}_{kl})\right)=
\frac{n}{\log^{\alpha}n} (\hat{\mathbf{B}}_{kl} -\nu_n \mathbf{B}_{kl})=o_{\mathbb{P}}(1)+\underbrace{O_{\mathbb{P}}\left(\frac{n^{-1/2}}{\nu_n^{1/2} \log^{\alpha}n}\right)}_{o_{\mathbb{P}}(1)}.$$
This implies then that 
$$
|\hat{\mathbf{B}}_{kl} -\nu_n \mathbf{B}_{kl}|=o_{\mathbb{P}}\left(\frac{\log^{\alpha}n}{n} \right)
$$
A union bound over all the entries of $\mathbf{B}$ completes the proof.
The proof for $\hat{\mathbf{B}}^{(c)}$ is analogous.
\end{proof}

Now that we have proven the above claim, we want to use the estimates $\hbb$ and $\hbbc$ to trim the contaminated vertices in $G_2$. We can accomplish this by solving the following graph matching type problem.  
For each $a,b\in\mathbb{Z}>0$, 
define
$$\Pi_{a,b}=\{P\in\{0,1\}^{a\times b}:P\vec 1_b=\vec 1_a,\, \vec 1_a^TP\leq\vec 1_b \}.$$
Note that there exists a $P^*\in\Pi_{K,3K}$ such that
$$\|\mathbf{B}-P^* \mathbf{B}^{(c)} (P^*)^T\|_F=0.$$
We then seek to show that with high probability,
 \begin{equation}
    \label{eq:mini}
   P^*=\text{argmin}_{P\in\Pi_{K,3K}}\|\hbb-P \hbbc P^T\|_F^2.
\end{equation}
This would imply that $P^*$ recovers the correspondence between the original, non-contaminated blocks in $\hbb$ and the uncontaminated portion of $\hbbc$, with the remaining blocks (which ideally capture the contamination) trimmed by our procedure.
This is formalized in the following result:

%%%%%%%%%%%%%%%%%%%%%%%%%%%%%%%%%%%%%%%%%%%%%%%%%%%%%%

\begin{theorem}
\label{thm:Bhat}
   Let $\mathbf{B}$ and $\mathbf{B}^{(c)}$ be such that
   $$
   \min_{P\in\Pi_{K,3K}\setminus\{P^*\}}\|\mathbf{B}-P\mathbf{B}^{(c)}P^T\|_F=\Omega(1)
   $$
For $K$ fixed, we then have
$$\mathbb{P}\left(P^*=\text{argmin}_{P\in\Pi_{K,3K}}\|\hbb-P \hbbc P^T\|_F\right)\rightarrow 1$$
\end{theorem}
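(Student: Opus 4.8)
The plan is to reduce the claim to a deterministic "gap vs.\ perturbation" comparison and then invoke the concentration bound of Theorem~\ref{thm:thm1}. First, I would fix notation: write $f(P) = \|\mathbf{B} - P\mathbf{B}^{(c)}P^T\|_F$ for the population objective and $\hat f(P) = \|\hbb - P\hbbc P^T\|_F$ for the empirical one, where here I should be careful that Theorem~\ref{thm:thm1} controls $\hbb$ relative to $\nu_n\mathbf{B}$ (not $\mathbf{B}$), so I would either absorb the $\nu_n$ scaling into $\mathbf{B}$ throughout or carry it explicitly; assume WLOG the former, so $\hbb \to \mathbf{B}$ in Frobenius norm at rate $O_{\mathbb{P}}(K\log^\alpha n / n)$ and likewise for $\hbbc$. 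The event $\{P^* = \operatorname{argmin}_P \hat f(P)\}$ is implied by the event that $\hat f(P^*) < \hat f(P)$ for every $P \in \Pi_{K,3K}\setminus\{P^*\}$, so it suffices to show that with probability tending to $1$, this strict inequality holds simultaneously over the (finite, $n$-independent) set $\Pi_{K,3K}$.

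The key step is a two-sided perturbation estimate. For any fixed $P$, note $P\mathbf{C}P^T$ preserves Frobenius norm in the sense that $\|P(\hbbc - \mathbf{B}^{(c)})P^T\|_F \le \|\hbbc - \mathbf{B}^{(c)}\|_F$ (since $P$ has orthonormal-ish columns with at most one nonzero per row/column; each such conjugation is a coordinate selection/permutation and cannot increase the Frobenius norm). Hence by the triangle inequality,
\begin{align*}
|\hat f(P) - f(P)| &\le \|\hbb - \mathbf{B}\|_F + \|P\hbbc P^T - P\mathbf{B}^{(c)}P^T\|_F \\
&\le \|\hbb - \mathbf{B}\|_F + \|\hbbc - \mathbf{B}^{(c)}\|_F =: \delta_n,
\end{align*}
where by Theorem~\ref{thm:thm1} (and $K$ fixed) $\delta_n = O_{\mathbb{P}}(\log^\alpha n / n) = o_{\mathbb{P}}(1)$. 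Now I would use the population gap hypothesis: $f(P^*) = 0$ while $f(P) \ge c$ for some constant $c>0$ and all $P\neq P^*$. Therefore $\hat f(P^*) \le \delta_n$ and $\hat f(P) \ge c - \delta_n$ for every $P \neq P^*$, so on the event $\{\delta_n < c/2\}$ — which has probability $\to 1$ — we get $\hat f(P^*) \le \delta_n < c/2 < c - \delta_n \le \hat f(P)$ for all $P\neq P^*$ simultaneously. This forces $P^* = \operatorname{argmin}_P \hat f(P)$, giving the claim.

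The main obstacle, such as it is, is not probabilistic but bookkeeping: making sure the norm-contraction claim $\|P\mathbf{C}P^T\|_F \le \|\mathbf{C}\|_F$ for $P \in \Pi_{K,3K}$ is actually correct and spelled out (the matrices in $\Pi_{K,3K}$ are partial permutation matrices — exactly one $1$ per row, at most one per column — so $P\mathbf{C}P^T$ is a principal submatrix of a permuted copy of $\mathbf{C}$, hence its Frobenius norm is bounded by that of $\mathbf{C}$), and being careful that the $\nu_n$-scaling in Theorem~\ref{thm:thm1} is handled consistently (either rescale the gap hypothesis by $\nu_n$ or note the $\operatorname{argmin}$ is invariant under the common positive scalar $\nu_n$). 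A secondary point worth a sentence is that $|\Pi_{K,3K}|$ depends only on $K$, which is fixed, so the union bound over $P\neq P^*$ costs only a constant factor and does not interact with the $n\to\infty$ limit; if one wanted $K$ to grow this argument would need the gap to beat $K\log^\alpha n/n$ times a combinatorial factor, but that is outside the stated hypotheses.
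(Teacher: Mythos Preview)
Your proposal is correct and takes essentially the same route as the paper: use the triangle inequality together with Theorem~\ref{thm:thm1} to show $\hat f(P^*)=O_{\mathbb{P}}(K\log^\alpha n/n)$ while $\hat f(P)$ is bounded below by the population gap minus that same perturbation for every $P\neq P^*$, and finish with a union bound over the finite set $\Pi_{K,3K}$. On the $\nu_n$ bookkeeping you flag, the paper takes your first option and carries the scalar explicitly (gap $\Omega(\nu_n)$ versus error $O_{\mathbb{P}}(K\log^\alpha n/n)$, which suffices since $\nu_n=\omega(\log^{4\alpha}n/n)$); your ``argmin invariance'' alternative is not quite the right framing since $\hbb,\hbbc$ are fixed data, but the first option is all that is needed.
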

\begin{proof}
Recall that in our model, the dimensions of $\mathbf{B}$ and $\mathbf{B}^{(c)}$ are respectively $K \times K$ and $3K \times 3K$.
Then, from Theorem \ref{thm:thm1}, we have that:
 \begin{align}
 \label{Bound 1}
  \| \hbb - \nu_n\mathbf{B} \|_F =O_{\mathbb{P}}\left( \frac{K\log^{\alpha}n}{n}\right)
 \end{align}
 \begin{align}
  \label{Bound 2}
  \| \hbbc - \nu_n\mathbf{B}^{(c)} \|_F = O_{\mathbb{P}}\left( \frac{K\log^{\alpha}n}{n }\right)
 \end{align}
With $P^*$ such that $\|\mathbf{B}-P^*\mathbf{B}^{(c)}(P^*)^T\|_F=0$, we then have the following:
 \begin{align*}
\|\hbb-P^* \hbbc (P^*)^T\|_F &= 
 \|\hbb \!- \nu_nP^* \mathbf{B}^{(c)} (P^*)^T \!+ \nu_nP^* \mathbf{B}^{(c)} (P^*)^T \!- P^* \hbbc (P^*)^T \!- \nu_n\mathbf{B} + \nu_n\mathbf{B} \| \\
& \leq \|\hbb - \nu_n\mathbf{B} \|_F + \|P^* \hbbc (P^*)^T - \nu_nP^* \mathbf{B}^{(c)} (P^*)^T \|_F \\
&\hspace{10mm}+ \|\nu_n\mathbf{B}  - \nu_nP^* \mathbf{B}^{(c)} (P^*)^T  \|  \\
& =O_{\mathbb{P}}\left( \frac{K\log^{\alpha}n}{n}\right) 
\end{align*}
Consider $P\in\Pi_{K,3K}$ satisfying $P \neq P^*$.  
Then the Assumptions A1 and A2 are sufficient to guarantee 
\begin{align*}
    \|\hbb-P \hbbc P^T\|_F  &=\Omega(\nu_n)=\omega\left( \frac{K\log^{4\alpha}n}{n}\right). 
\end{align*}
For fixed $K$, the probability that  $\|\hbb-P \hbbc P^T\|_F>\|\hbb-P^* \hbbc (P^*)^T\|_F$ converges to $1$.
A union over the (finitely many) $P\in\Pi_{K,3K}\setminus{P^*}$ then yields the desired result.
\end{proof}

\begin{rem}
The condition that $\min_{P\in\Pi_{K,3K}\setminus\{P^*\}}\|\mathbf{B}-P\mathbf{B}^{(c)}P^T\|_F=\Omega(1)$ is implied by either one of the two following two sets of conditions holding (where we are considering the parametrization for the edge addition/deletion probabilities as $\nu_ns_+$ and $\nu_n s_-$ respectively):
   \begin{align}
   \label{eq:A1}
      \min_{i,j\,:\,i\neq j}|\mathbf{B}_{i,i}-\mathbf{B}_{j,j}| &=\Omega\left(1\right)\tag{A1.1}\\
       \min_{i,j}|\mathbf{B}_{i,i}-\mathbf{B}_{j,j}-s_+(1-\mathbf{B}_{j,j})| &=\Omega\left(1\right)\tag{A1.2}\\
             \min_{i,j}|\mathbf{B}_{i,i}-\mathbf{B}_{j,j}(1-s_-)| &=\Omega\left(1\right)\tag{A1.3}
   \end{align}
   or
   \begin{align}
   \label{eq:A2}
    \min_{i,j,k,\ell\,:\,i\neq j, k\neq\ell,\{i,j\}\neq\{k,\ell\}}|\mathbf{B}_{ij}-\mathbf{B}_{k\ell}|
       &=\Omega\left(1\right)\tag{A2.1}\\
       \min_{i,j,k,\ell\,:\,i\neq j, k\neq\ell}|\mathbf{B}_{ij}-\mathbf{B}_{k\ell}-s_+(1-\mathbf{B}_{k\ell})|
       &=\Omega\left(1\right)\tag{A2.2}\\
       \min_{i,j,k,\ell\,:\,i\neq j,k\neq\ell}|\mathbf{B}_{ij}-\mathbf{B}_{k\ell}(1-s_-)|
       &=\Omega\left(1\right)\tag{A2.3}
   \end{align}
   The first set of conditions would ensure sufficient disagreement on the matched diagonal of $\|\mathbf{B}-P\mathbf{B}^{(c)}P^T\|_F$ to guarantee the growth rate on 
   $\min_{P\in\Pi_{K,3K}\setminus\{P^*\}}\|\mathbf{B}-P\mathbf{B}^{(c)}P^T\|_F$, while the second set of conditions guarantees sufficient error on the matched off-diagonal of $\|\mathbf{B}-P\mathbf{B}^{(c)}P^T\|_F$.
\end{rem}
\subsection{Consistency of robust K-means clustering}
\label{sec:ktheory}
We first prove a consistency result in the dense setting (i.e., in the setting where $\nu_n=\theta(1)$, and then will outline how to adapt the results to sparser settings. In the dense setting, we will assume that the penalty parameter $\lambda=\lambda_{n,m}$ is bounded away from 0.\\

\noindent \textbf{Assumption 1:} 
\emph{With $\bX$ defined as in Section \ref{sec:difcon}, let the $K$ distinct rows of $\bY$ represent the latent position vectors in $\mathbb{R}^d$ for the signal blockmodel component of $G_2$.
We assume that there exists a constant $\eta>0$ such that if $Y_i$ and $Y_j$ are two distinct rows of $\bY$, then $\|Y_i-Y_j\|>\eta.$
Also assume that $m$ (the number of rows of $\bZ$) satisfies $m=o(n)$.} 
\vspace{2mm}

\noindent Again we use Theorem 3 from \cite{rubin2017statistical}, which states that there exists a sequence of indefinite orthogonal matrices $\mathbf{Q}_n\in\mathcal{O}(p,q)$ (so that $\mathbf{Q}_n$ satisfies $\mathbf{Q}_n^TI_{p,q}\mathbf{Q}_n=I_{p,q}$) and a constants $c,C>0$ such that in this dense framework, the following holds with high probability for sufficiently large $n$

\begin{equation}
\label{eq:IndConc}
\max_{i=1,2,\ldots,n}\|\mathbf{Q}_n\hat X_i-\nu_n^{1/2}X_i\|:=\epsilon_{n,m}\leq C \frac{\log^{c}(n+m) }{(n+m)^{1/2}}
\end{equation}
Below, we will drop the subscript on $\epsilon_{n,m}$ (as well as on $\lambda_{n,m}$) to ease notation.
Conditioning on the event in Eq. (\ref{eq:IndConc}), consider the partition $\pi$ such that $\pi_1=\{1,2,\cdots,n\}$ and $\pi_2=\{n+1,n+2,\cdots,n+m\}$, and consider $\Phi$ composed of the $K$ distinct rows of $\bY$.
For this choice of $\pi$ and $\Phi$, we have that
\begin{align}
\label{eq:goodbnd}
\Gamma(\Phi,\pi)\leq C n  \frac{\log^{c}(n+m)}{(n+m)^{1/2}}+\lambda m.
\end{align}
Next, we let $(\hat\Phi,\hat\pi)$ be an element of the
argmin of Eq.\@ (\ref{eq:opt}), and consider balls $\{\mathcal{B}_i\}_{i=1}^K$ of radius 
$$r:=  \min\left(\frac{\lambda}{3}, \frac{\eta}{6}\right)$$ 
about the $K$ distinct rows of $\bY$.
By Assumption 1, these balls are disjoint.
Under our assumption that the penalty $\lambda$ is bounded away from $0$, we have that $r+\epsilon < \lambda$ for $n$ sufficiently large.
For $n$ sufficiently large, we then observe the following: 
\begin{itemize}
\item[i.]
If a ball $\mathcal{B}_i$ contains no centers in $\hat\Phi$, then
\begin{align*}
\Gamma(\hat\Phi,\hat\pi)\geq& \min_{y\in\{\mathfrak{z}^i\}_{i=1}^k} \left(\big|\{j: Y_j=y\text{ and }j\in \hat \pi_1\}\big|\cdot(r-\epsilon)+\big|\{j: Y_j=y\text{ and }j\in \hat \pi_2\}\big|\cdot\lambda\right)\\
\geq& \min_i n_i\cdot\min(r-\epsilon,\lambda)\\
=&\Omega_P\left(n\cdot (\lambda-\epsilon)\right)
\end{align*}
where the final equality holds with high probability as each $n_i$ has a Binomial distribution with parameters $n$ and $\pi_i>0$.
This yields the desired contradiction as this asymptotically dominates Eq.\@ (\ref{eq:goodbnd}) for $n$ sufficiently large.
\item[ii.]
Note that if a ball $\mathcal{B}_i$ contains two or more centers of $\hat\Phi$, then there is at least one ball with no cluster centers and the desired contradiction follows from i.\@ above.
\end{itemize}
Observations i.\@ and ii.\@ above yield that each $\mathcal{B}_i$ contains exactly one of the centers in $\hat\Phi$.
This yields then that all of the signal vertices in the ASE are properly clustered according to $(\hat\Phi,\hat\pi)$, as signal points will not be unclustered (as this induces a penalty of $\lambda$ in $\Gamma(\hat\Phi,\hat\pi)$ while assigning the signal point to the cluster center closest to its true latent position induces a penalty of at most $r+\epsilon<\lambda$) or misclustered (as this induces a penalty of at least $\eta-2r>2\eta/3$ in $\Gamma(\hat\Phi,\hat\pi)$ while assigning the signal point to the cluster center closest to its true latent position induces a penalty of at most $r+\epsilon<2\eta/3$).

Next, we ask how many noise points are assigned (incorrectly) a cluster label.
Noise points will only be assigned a label if they are within $\lambda$ of a cluster center (and hence, their true latent position is within $\lambda+r+2\epsilon$ of a true signal latent position). 
This probability is bounded above by (where $\Gamma(\cdot)$ is the usual $\Gamma$ function and $K$ is a universal constant) 
$$\mathfrak{p}=\frac{K\cdot(\lambda+r+2\epsilon)^d \pi^{d/2}/\Gamma(d/2+1) }{\frac{1}{2^d}\pi^{d/2}/\Gamma(d/2+1) }=K\cdot(2\lambda+2r+4\epsilon)^d.
$$
A simple application of Hoeffding's inequality yields that, with high probability, at most 
$$O(m \cdot K\cdot(8\lambda/3)^d)$$ 
(and hence $o(n)$) noise points fall within $\lambda+r+2\epsilon$ of a true signal latent position, and hence are assigned a cluster label.

\begin{rem}\emph{
In the sparse setting all the results above still hold, the only difference being that all parameters are scaled by $\sqrt \nu$. 
Indeed, the above argument follows mutatis mutandis (where the `$s$' subscript denotes the sparse parameter and a `$\mathfrak{d}$' subscript the dense) with:
\begin{align*}
    \lambda_s&=\sqrt \nu \lambda_{\mathfrak{d}}\\
    r_s &= \sqrt \nu r_{\mathfrak{d}}\\
    \eta_s &= \sqrt \nu \eta_{\mathfrak{d}}
\end{align*}
Note that $\epsilon$ is defined the same way in the sparse and dense settings, and that here $\lambda_s\gg\epsilon$ (assuming $\lambda_{\mathfrak{d}}$ is bounded away from 0) as
\begin{align*}
\epsilon_{n,m} = O_P\left( \frac{\log^{c}(n+m) }{(n+m)^{1/2}}\right)
\end{align*}
and 
\begin{align*}
    \nu_n = \omega\left(\frac{\log^{4c}n}{n}\right).
\end{align*}
}
\end{rem}
\end{document}